\documentclass[twoside,11pt]{article}

\usepackage[preprint,abbrvbib]{jmlr2e}
\usepackage{lastpage}
\usepackage{microtype}
\usepackage{subcaption}
\usepackage{booktabs}
\usepackage{amsmath,amsfonts}
\usepackage{mathtools}
\usepackage{array}
\usepackage{enumitem}
\usepackage{algorithm}
\usepackage{algorithmic}
\usepackage{placeins}
\hypersetup{
  hidelinks,
  pdftitle={Causal Retention in Interactive Agents: Interface Factorization and Selective Adaptation},
  pdfauthor={Shengjun Zhang, Tingyi Liu, Dong Xie, Yunlong Dong, Xiang Wang, and Cheng Zeng},
  pdfkeywords={causal retention, interactive agents, mechanism memory, causal adaptation, structural causal models}
}

\newcolumntype{L}[1]{>{\raggedright\arraybackslash}p{#1}}
\newcommand{\E}{\mathbb E}
\newcommand{\Pp}{\mathbb P}
\setlist[itemize]{leftmargin=1.55em,itemsep=0.16em,topsep=0.24em}
\setlist[enumerate]{leftmargin=1.7em,itemsep=0.16em,topsep=0.24em}

\jmlrheading{Preprint}{2026}{1--\pageref{LastPage}}{September 2026}{}{}{Shengjun Zhang, Tingyi Liu, Dong Xie, Yunlong Dong, Xiang Wang, and Cheng Zeng}
\ShortHeadings{Causal Retention in Interactive Agents}{Zhang, Liu, Xie, Dong, Wang, and Zeng}
\firstpageno{1}

\begin{document}

\title{Causal Retention in Interactive Agents: Interface Factorization and Selective Adaptation}

\author{\name Shengjun Zhang$^{1,2}$ \email sj.zhang@hubu.edu.cn\\
\name Tingyi Liu$^{3}$ \email tingyiL@whu.edu.cn\\
\name Dong Xie$^{4,*}$ \email xiedong04@baidu.com\\
\name Yunlong Dong$^{5}$ \email yunlongdong@outlook.com\\
\name Xiang Wang$^{1,2,*}$ \email wangxiang.whu@whu.edu.cn\\
\name Cheng Zeng$^{1,2}$ \email zc@hubu.edu.cn\\[2pt]
\addr $^1$School of Artificial Intelligence, Hubei University\\
$^2$Key Laboratory of Intelligent Sensing System and Security (Hubei University), Ministry of Education\\
$^3$School of Economics and Management, Wuhan University\\
$^4$Baidu Inc.\\
$^5$Independent Researcher\\
$^*$Correspondence: xiedong04@baidu.com; wangxiang.whu@whu.edu.cn}

\maketitle

\begin{abstract}%
Task performance need not determine which intervention mechanism an agent
retains. We study causal retention: whether a frozen learned state answers a
mechanism-probe map fixed independently of training, including action, context,
direct target, value, and delay. For finite structural causal model classes,
the optimal probe error is a Bayes decision risk. It vanishes exactly when
every learning-interface fiber lies within one probe-answer fiber; any state
obtained by post-processing that interface inherits the same lower bound. A
posterior-coverage theorem characterizes budgeted retesting, while an exact
edit decomposition shows that the shifted set is the unique support of an
error-free target update. Causal Core implements these conditions through
evidence-gated writing, readout filtering, temporal credit, hidden-context
setup, and local diagnostic updates. Experiments cover finite causal systems,
continuous simulators, an official TD-MPC2 world model, and
Qwen2.5-7B-Instruct. A frozen Qwen last-layer probe reaches 0.958 balanced
accuracy on source mechanisms but 0.583 on changed delays; the gated mechanism
state reaches 1.000 and accepts only 0.056 of synchronized-readout candidates.
In TD-MPC2, five
target states per actuator recover effect-sign accuracy from 0.057 to 0.948
without degrading stable responses. Causal retention is therefore distinct
from task sufficiency and source-domain decodability.
\end{abstract}

\begin{keywords}
causal retention, interactive agents, mechanism memory, causal adaptation, structural causal models
\end{keywords}

\section{Introduction and Related Work}
\label{sec:intro}

Training objectives identify equivalence classes of environments, not complete
intervention mechanisms. Two interactive systems may induce the same reward
process, preferred action, or one-step prediction target while disagreeing
about the direct target of an action, its active context, or its delay. A state
that is sufficient for the training task can therefore be insufficient for an
intervention query that the objective never had to answer.

The ability of a frozen learned state to answer such queries is called
\emph{causal retention}. The evaluator fixes the probe semantics independently
of the learned state, freezes that state after source learning, and then asks
which action changes which target, under which context, to what value, and
after what delay. Under a local mechanism shift, the same state must identify
the changed entry without overwriting invariant ones. This is a property of
what learning retained, not only of the behavior it produced.

Several causal-learning traditions address neighboring but different
estimands. Structural causal models define interventions and counterfactuals
\citep{pearl2009causality,peters2017elements}. Treatment-effect methods
estimate average, conditional, or individual effects on a designated outcome
\citep{rubin1974estimating,imbens2015causal,chernozhukov2018double,wager2018causal,kunzel2019metalearners,nie2021quasi,shalit2017estimating}.
Causal discovery instead estimates a graph or an interventional equivalence
class
\citep{spirtes2000causation,chickering2002optimal,hauser2012characterization,shimizu2006linear,zheng2018dags}.
A correct outcome effect or graph can remain silent about the action-indexed
value, delay, and context that must be edited after transfer.

Invariant prediction seeks predictors stable across environments
\citep{peters2016invariant,arjovsky2019invariant}. Causal representation
learning instead studies recovery of latent causal variables or graphs from
indirect observations
\citep{locatello2019challenging,scholkopf2021toward,gamella2025sanity,varici2025score}.
Intervention extrapolation uses an identifiable representation to predict the
effect of unseen actions on an outcome \citep{saengkyongam2024identifying}.
Causal retention fixes the downstream answer map and asks whether a completed
learning process preserved it. It neither requires recovery of the latent SCM
nor reduces the answer to one designated outcome.

Reinforcement learning, world models, and language agents optimize return,
transition prediction, or action quality
\citep{sutton2018reinforcement,ha2018worldmodels,hafner2023mastering,hansen2024tdmpc2,yao2023react}.
Causal reinforcement learning and data-fusion methods improve decisions under
interventions \citep{bareinboim2016causal,zhang2021causal}, while ranking-style
dynamics losses such as Hybrid$^2$ supervise which intervention should be
preferred \citep{zou2024hybrid2}. These objectives can be useful and exactly
optimized without requiring the frozen state to expose every coordinate of the
mechanism needed by a later probe. The distinction studied here is therefore
between an objective's behavioral sufficiency and a learned state's causal
sufficiency.

Concurrent work on Explicit Symbolic Behavioral Models uses executable
``mechanism memory,'' adaptive questions, and active world-model branches to
train editable symbolic policies \citep{shindo2026explicit}. There, mechanism
memory predicts symbolic events and rewards and is optimized jointly with
policy-specific questions. Here, the probe map and its direct-target semantics
are fixed independently of the learner, scoring occurs after the state is
frozen, and the main object is the smallest probe error permitted by a given
learning interface. The two formulations therefore use similar language for
different statistical questions.

One realization of causal retention is a frozen, queryable mechanism memory
with entries of the form
\[
  \theta=(a,c,y,v,\delta),
\]
where $a$ is an action, $c$ a context predicate, $y$ the direct target, $v$
the effect value, and $\delta$ the delay. After learning, the memory is frozen
and queried on held-out mechanism probes. Equal reward laws, rankings, graphs,
or action choices may still induce different probe answers.

For finite structural causal model (SCM) classes, the optimal population probe
error is an explicit Bayes risk. It is zero exactly when the interface quotient
refines the probe quotient, is monotone under refinement, and lower-bounds every
decoder of a post-processed state. The same construction with a metric loss
covers continuous intervention responses. A posterior-coverage identity
characterizes restricted target-domain retesting, and an exact edit
decomposition characterizes selective adaptation. Standard concentration and
testing inequalities are used only for finite-sample rates.

Causal Core is the corresponding evidence-gated memory layer, with target
admission, readout filtering, temporal credit, hidden-context setup, diagnostic
localization, and entrywise updating. Experiments compare policy, ranking,
conditional-discovery, tuple, latent-world-model, neural-probe, transfer,
language-model proposal, readout-unsafe, and ungated baselines. Continuous
responses are scored over held-out state, action-direction, and horizon
distributions. An official pretrained TD-MPC2 checkpoint supplies a public
implicit-world-model comparison. Frozen Qwen hidden states are evaluated with
linear and nonlinear decoders under externally fixed probe semantics.

\begin{figure}[t]
\centering
\includegraphics[width=0.98\linewidth]{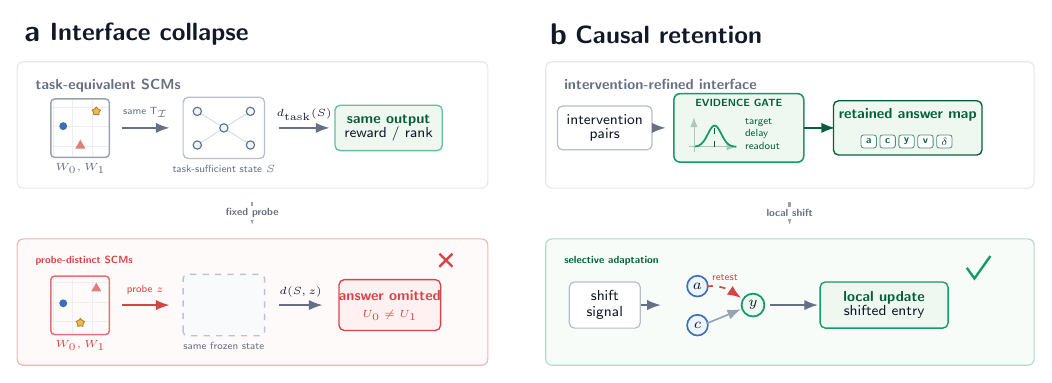}
\caption{Interface collapse and causal retention. A task interface can merge
probe-distinct SCMs. A refined interface retains the answer map and supports
selective adaptation.}
\label{fig:overview}
\end{figure}

\section{Problem Formulation}
\label{sec:prelim}

Let an interactive structural causal model (SCM) be
\[
  X_{t+1}=F(X_t,H_t,A_t,\varepsilon_{t+1}),\qquad
  O_t=\psi(X_t,H_t,\eta_t),
\]
with physical state $X_t\in\mathcal X$, optional latent context
$H_t\in\mathcal H$, action $A_t\in\mathcal A$, and observation $O_t$. Write
$\Xi_t=(X_t,H_t)\in\mathcal X\!\times\!\mathcal H$ for the full evaluator
state; the learner need not observe $H_t$. A protocol $\Pi$
maps histories to actions and induces a trajectory law
$P_M^\Pi$ for each SCM $M$.

For an action intervention at time $t$, write
$P_M^{do(A_t=a),\Pi}$ for the law obtained by replacing the protocol action
with $a$ at $t$ and leaving the remaining rollout policy fixed. A context
predicate is a map $c:\mathcal X\times\mathcal H\to\{0,1\}$. From the
equations of $M$, the evaluator fixes a structural index
\[
  \sigma_M(\xi,a)
  \in
  \bigl(\mathcal C\times\mathcal Y\times\{0,\ldots,D\}\bigr)
  \cup\{\bot\}.
\]
When non-null, $\sigma_M(\xi,a)=(c,y,\delta)$ records the active context
class, the variable directly acted on by the mechanism, and its first
structural response delay. A deterministic descendant or sensor readout is
not a direct target. In finite worlds $\sigma_M$ is read from the structural
assignment; in continuous simulators it is defined by the paired-pulse
functional in Appendix~\ref{app:world-model-probes}.

For $z=(\xi,a)$ and $r=(c,y,v,\delta)$, define
\[
  p_M(r\mid z)
  :=
  \mathbf 1\{\sigma_M(\xi,a)=(c,y,\delta)\}
  \mathbb P_M^{do(A_t=a),\Pi}
  \{X_{t+\delta}^y=v\mid \Xi_t=\xi\}.
\]
Let $\mathcal R$ be the finite set of candidate quadruples $(c,y,v,\delta)$.
If $\sigma_M(\xi,a)=\bot$, set $G_M(z)=\emptyset$; otherwise define
\[
  G_M(z)
  =
  \operatorname*{arg\,max}_{r\in\mathcal R}p_M(r\mid z),
\]
restricted to a probe class on which the maximizer is unique. In deterministic
SCMs a tuple $\theta=(a,c,y,v,\delta)$ is correct exactly when
$p_M((c,y,v,\delta)\mid z)=1$. In stochastic SCMs uniqueness is enforced by a
margin $\kappa>0$:
\[
  p_M(r^\star\mid z)
  -
  \max_{r\neq r^\star}p_M(r\mid z)
  \ge \kappa .
\]

\subsection{Mechanism Space}
Let
\[
  \Theta=\mathcal A\times\mathcal C\times\mathcal Y
  \times\mathcal V\times\{0,\ldots,D\}.
\]
A mechanism memory is a finite set $\mu\subset\Theta$. For a query
$z=(\xi,a)$,
define
\[
  T_\mu(\xi,a)
  =
  \{(c,y,v,\delta): (a,c,y,v,\delta)\in\mu,\ c(\xi)=1\}.
\]
Two answers are equivalent, written
$T_\mu(\xi,a)\equiv T_{\mu'}(\xi,a)$, when they agree on the active context,
direct target, value, and delay classes fixed by the evaluator. Exact tuple
evaluation uses ordinary set equality.

The constructive and empirical results use a representable mechanism schema:
for every evaluated SCM $M$, there is a finite ground-truth memory
$\mu_M^\star\subset\Theta$ such that
$T_{\mu_M^\star}(z)=G_M(z)$ for $Q$-almost every probe $z$. The lower bounds
are stated directly for $G_M$ and do not require such a representation.

\subsection{Mechanism Risk}
For probe distribution $Q$ and ground-truth memory $\mu^\star=\mu_M^\star$, the exact
mechanism risk is
\[
  R_Q(\widehat\mu,\mu^\star)
  =
  \mathbb P_{(\xi,a)\sim Q}\!\left[
    T_{\widehat\mu}(\xi,a)\not\equiv T_{\mu^\star}(\xi,a)
  \right].
\]
Graph error, reward regret, top-action accuracy, and prediction error are
not substitutes for $R_Q$: each marginalizes at least one coordinate of
$(a,c,y,v,\delta)$.

Coordinate risks localize the source of an error. Let
$\mathcal K=\{\mathrm{ctx},\mathrm{tar},\mathrm{val},\mathrm{del}\}$ and
let $\pi_k T_\mu(\xi,a)$ be the projection of the answer set onto coordinate
$k$. Define
\[
  R_Q^k(\widehat\mu,\mu^\star)
  =
  \mathbb P_Q\!\left[
  \pi_kT_{\widehat\mu}(\xi,a)\neq \pi_kT_{\mu^\star}(\xi,a)
  \right].
\]
When equivalence means equality of all four projections,
\[
  \max_{k\in\mathcal K} R_Q^k
  \le
  R_Q
  \le
  \sum_{k\in\mathcal K} R_Q^k .
\]
Thus the exact risk is the main target, while coordinate risks diagnose
which part of the mechanism failed. Two additional operational risks are
\[
\begin{aligned}
  R_{\mathrm{fp}}
  &=
  \mathbb P_Q[T_{\widehat\mu}(\xi,a)\neq\emptyset,\ T_{\mu^\star}(\xi,a)=\emptyset],
  \\
  R_{\mathrm{fn}}
  &=
  \mathbb P_Q[T_{\widehat\mu}(\xi,a)=\emptyset,\ T_{\mu^\star}(\xi,a)\neq\emptyset].
\end{aligned}
\]
They distinguish spurious mechanisms from missed mechanisms.

The fixed-memory protocol is essential. Exploration may use rewards,
observations, auxiliary descriptions, or rankings; evaluation fixes $\widehat\mu$ and
queries only the answer map $T_{\widehat\mu}$. Thus a behaviorally competent
learner receives no credit unless its stored mechanism answers the probe:
write $\widehat T_i=T_{\widehat\mu}(\xi_i,a_i)$ and
$T_i^\star=T_{\mu^\star}(\xi_i,a_i)$. Then
\[
  \widehat\mu\mapsto
  \{T_{\widehat\mu}(\xi_i,a_i)\}_{i=1}^m ,
\]
\[
  \mathrm{score}
  =
  1-\frac1m\sum_{i=1}^m
  \ell_i,
  \quad
  \ell_i=
  \mathbf 1\{\widehat T_i\not\equiv T_i^\star\}.
\]

\subsection{Selective Adaptation}
Let $\mu^s=\{\theta_j^s\}_{j=1}^n$ be a source memory. A schema map
$\phi=(\phi_A,\phi_C,\phi_Y)$ induces mapped entries
\[
  \phi(\theta_j^s)=
  (\phi_A(a_j^s),\phi_C(c_j^s),\phi_Y(y_j^s),v_j^s,\delta_j^s).
\]
The target memory is $\mu^t=\{\theta_j^t\}_{j=1}^n$. Define the stable and
shifted index sets
\[
  S=\{j:\theta_j^t=\phi(\theta_j^s)\},\qquad
  B=[n]\setminus S .
\]
For an adapted memory $\widehat\mu^t$,
\[
\begin{aligned}
  L_{\mathrm{adapt}}
  &=
  \frac1n\sum_{j=1}^n
  \mathbf 1\{\widehat\theta_j^t\neq \theta_j^t\},\\
  L_{\mathrm{stab}}
  &=
  \frac1{|S|}\sum_{j\in S}
  \mathbf 1\{\widehat\theta_j^t\neq \theta_j^t\},
\end{aligned}
\]
and $L_{\mathrm{shift}}$ is defined analogously over $B$; an empty-set loss
is defined as zero. Selective
adaptation requires $L_{\mathrm{stab}}=L_{\mathrm{shift}}=0$ under a small
target budget.
For exploration and sampling randomness conditional on a fixed source-target
pair, define
\[
  \mathcal R_{\mathrm{adapt}}
  =\mathbb E[L_{\mathrm{adapt}}\mid\mu^s,\mu^t],
  \qquad
  \mathcal R_{\mathrm{stab}}
  =\mathbb E[L_{\mathrm{stab}}\mid\mu^s,\mu^t],
\]
\[
  \mathcal R_{\mathrm{shift}}
  =\mathbb E[L_{\mathrm{shift}}\mid\mu^s,\mu^t].
\]
The expectation decomposes as
\[
  \mathcal R_{\mathrm{adapt}}
  =
  \frac{|S|}{n}\mathcal R_{\mathrm{stab}}
  +
  \frac{|B|}{n}\mathcal R_{\mathrm{shift}}.
\]
The operational failure probability is
\[
  \mathcal R_{\mathrm{op}}
  =
  \mathbb P(L_{\mathrm{stab}}>0\ \vee\ L_{\mathrm{shift}}>0),
\]
because a single missed shifted mechanism can be invisible in average loss
when $|B|\ll n$.

Two limiting cases illustrate the requirement. Pure transfer keeps
$\widehat\mu^t=\phi(\mu^s)$ and therefore fails on $B$. Broad re-learning may
fix $B$ but can corrupt $S$. The desired operation is an entrywise map
\[
  \widehat\theta_j^t
  =
  \begin{cases}
    \phi(\theta_j^s), & j\in S,\\
    \theta_j^t, & j\in B,
  \end{cases}
\]
with evidence deciding membership in $B$.

\subsection{Standing Conditions}

Each lower bound uses only the assumptions in its statement. The constructive
guarantees for Causal Core use the following grouped conditions.
\begin{enumerate}[label=(C\arabic*),ref=C\arabic*]
  \item \label{cond:alignment}
  The source and target memories contain $n$ aligned keys. The evaluator's
  probe distribution $Q$ is fixed independently of target-domain samples. The
  schema map $\phi$ is supplied or estimated from a disjoint alignment sample;
  its error is represented by the event $F_\phi$ rather than assumed away.
  Target-coordinate values and visible context predicates in the memory schema
  are measurable from the learner's observation history. Latent predicates are
  never supplied to the learner and are handled only through assigned setup
  and control regimes.
  \item \label{cond:interventions}
  A gate is estimated from independent, bounded outcomes collected under
  paired action and reference interventions at the same pre-intervention
  context. Equivalently, an observational implementation must satisfy
  consistency, conditional exchangeability, and positivity and use the
  corresponding propensity correction. An adaptive randomized policy may use
  the logged inverse-propensity contrast in
  Lemma~\ref{lem:adaptive-contrast}. The finite agents execute interventions
  and are scored descriptively; the continuous and simulator protocols use
  explicit paired pulses.
  \item \label{cond:margins}
  Every finite decision used by a gate is separated from its threshold. Write
  $\gamma_g>0$ for target, context, readout, and temporal margins,
  $\Delta=p-q>0$ for diagnostic separation, and $\eta_j>0$ for the excess
  loss of a wrong local-update candidate. The correct local candidate belongs
  to the finite class $\mathcal N_j$.
  \item \label{cond:metric}
  Metric-valued probe answers lie in a compact space and use a bounded,
  continuous distortion. The probe distribution is fixed before fitting. A
  discrete implementation may additionally fix a measurable partition; its
  paired contrast routine fails with probability at most $\beta_M$.
\end{enumerate}
Conditions~\ref{cond:interventions}--\ref{cond:margins} are sufficient, not
necessary. When they fail, the quotient and decision lower bounds still
apply, but the finite-sample upper bounds need not.

\subsection{Population Interfaces}
An interface $\mathcal I$ induces a population descriptor
$\tau_{\mathcal I,\Pi}(M)=P_M^{\mathcal I,\Pi}$: the data law generated by
the protocol. Examples include reward trajectories, observations, rankings,
prediction errors, diagnostic signals,
or mechanism probes. For probe class $\mathcal Z$ with answer map $G_M(z)$,
write
\[
\begin{aligned}
  M_0\sim_{\mathcal I,\Pi}M_1
  &\Longleftrightarrow
  P_{M_0}^{\mathcal I,\Pi}=P_{M_1}^{\mathcal I,\Pi},\\
  M_0\sim_{\mathcal Z}M_1
  &\Longleftrightarrow
  G_{M_0}(z)=G_{M_1}(z)\quad \forall z\in\mathcal Z .
\end{aligned}
\]
The relevant condition is whether $\sim_{\mathcal I,\Pi}$ refines
$\sim_{\mathcal Z}$ for mechanism probes. If not, the interface is too
coarse.
For the finite analysis, fix full-support distributions $\nu$ on $\mathcal M$
and $Q$ on $\mathcal Z$. Let
\[
  W\sim\nu,\qquad Z\sim Q,\qquad
  \mathsf T_{\mathcal I}=\tau_{\mathcal I,\Pi}(W),\qquad
  U=G_W(Z),
\]
where $Z$ is independent of $W$. The \emph{causal-retention risk} of the
interface is the probe Bayes risk
\begin{equation}
  \mathfrak B_{\nu,Q}(\mathcal I,\Pi)
  =
  \mathbb E\!\left[
    1-\max_{u\in\mathcal U}
    \mathbb P(U=u\mid\mathsf T_{\mathcal I},Z)
  \right].
  \label{eq:retention-bayes-risk}
\end{equation}
It will equal the smallest error of any decoder that sees the complete
population interface. For $U=(U_1,\ldots,U_d)$, define
$\mathfrak B_j$ by replacing $U$ with $U_j$ in
Eq.~\eqref{eq:retention-bayes-risk}. These risks depend on the interface and
the fixed probe map rather than on a chosen decoder.
The population minimax interface risk is
\[
  \mathfrak R(\mathcal I,\mathcal Z)
  =
  \inf_{\widehat G}
  \sup_{M\in\mathcal M}
  \mathbf 1\{\widehat G(\tau_{\mathcal I,\Pi}(M))\neq G_M\}.
\]
Theorem~\ref{thm:retention-factorization} gives the exact operational condition:
$\mathfrak R(\mathcal I,\mathcal Z)=0$ only when the interface separates all
probe-distinct SCMs.

For a compact metric answer space $(\mathcal U,\rho)$, define the expected and
worst-case metric risks
\begin{equation}
\begin{aligned}
  \mathfrak B^\rho_{\nu,Q}(\mathcal I,\Pi)
  &=\inf_g\mathbb E\rho\bigl(U,g(\mathsf T_{\mathcal I},Z)\bigr),\\
  \mathfrak R^\rho(\mathcal I,\mathcal Z)
  &=\inf_g\sup_{M\in\mathcal M,\,z\in\mathcal Z}
    \rho\bigl(G_M(z),g(\tau_{\mathcal I,\Pi}(M),z)\bigr).
\end{aligned}
  \label{eq:metric-retention-risk}
\end{equation}
For Hamming loss, $\mathfrak B^\rho=\mathfrak B$. The metric version therefore
changes the loss geometry, not the semantics of retention.

\subsection{Exact Causal Retention}
A possibly random learned state $S$ has exact causal retention for
$\mathcal Z$ on $\mathcal M$ if there exists a single decoder $d$ such that
\[
  \mathbb P_M\!\left\{d(S,z)=G_M(z)
  \text{ for every }z\in\mathcal Z\right\}=1
  \qquad \forall M\in\mathcal M .
\]
The probe class and its answer semantics are fixed independently of the learned
state. This makes causal retention independent of the internal implementation:
a memory table, recurrent state, or external module qualifies only to the
extent that it supports the same decoder after learning has stopped.

For ranking interfaces, the observed object is a distribution over ordered
pairs
\[
  \mathcal D_{\mathrm{rank}}
  =
  \{(i,j): U_M(i)>U_M(j)\},
\]
where $U_M(i)$ is the intervention utility. For reward interfaces it is a
trajectory marginal of $\sum_t r_t$. For graph interfaces it is an edge set.
For diagnostic adaptation it is instead an indexed surprise channel
$\{X_{j,r}\}_{j,r}$ tied to transferred memory entries. If two SCMs have the
same interface law but different probe answers, no estimator using that
interface can guarantee zero mechanism risk.

\section{Causal Core}
\label{sec:method}

Causal Core maintains mechanism memory independently of the policy optimizer.
Its state consists of a candidate set $C_t$, readout set $\mathcal R_t$, temporal buffer $B_t$,
diagnostic surprise vector $S_t$, and memory $\mu_t$. It writes
$(a,c,y,v,\delta)$ only when intervention evidence supports every coordinate.
For a candidate $\theta=(a,c,y,v,\delta)$, define its memory key
$\kappa(\theta)=(a,c,y,\delta)$. Let $a^0$ be the reference intervention for
$a$. For a learner-visible context predicate $c$, let
\[
\begin{aligned}
  \mathcal D_t^1(a,c,\delta)
  &=\{s:s+\delta\le t,\ A_s=a,\ c(\Xi_s)=1\},\\
  \mathcal D_t^0(a,c,\delta)
  &=\{s:s+\delta\le t,\ A_s=a^0,\ c(\Xi_s)=1\}.
\end{aligned}
\]
Write $n_t^u=|\mathcal D_t^u|$ and
$Y_s^\theta=\mathbf 1\{X_{s+\delta}^y=v\}$. A candidate does not mature
until $n_t^0,n_t^1>0$. Its paired empirical frequencies are
\begin{equation}
\label{eq:empirical-frequency}
  \widehat p_t^u(\theta)
  =
  \frac{1}{n_t^u}
  \sum_{s\in\mathcal D_t^u(a,c,\delta)}Y_s^\theta,
  \qquad u\in\{0,1\}.
\end{equation}
Under Condition~\ref{cond:interventions}, their difference estimates the
interventional contrast between $do(a)$ and $do(a^0)$ rather than an
unadjusted action association. The target gate is
\begin{equation}
\label{eq:target-gate}
\begin{aligned}
  \widehat\Delta_t(\theta)
  &=
  \widehat p_t^1(\theta)-\widehat p_t^0(\theta),\\
  \Gamma_t^{\mathrm{tar}}(\theta)
  &=
  \mathbf 1\{n_t^0n_t^1>0,\ \widehat\Delta_t(\theta)\ge\lambda_t\}.
\end{aligned}
\end{equation}
The full write gate is
\begin{equation}
\label{eq:admitted-set}
\begin{aligned}
  G_t(\theta)
  &=
  \prod_{g\in\{\mathrm{tar},\mathrm{time},\mathrm{readout},\mathrm{ctx}\}}
  \Gamma_t^g(\theta),\\
  C_t^{\mathrm{adm}}&=\{\theta\in C_t:G_t(\theta)=1\}.
\end{aligned}
\end{equation}
The memory transition is the key-preserving operator
\begin{equation}
\label{eq:memory-write}
  \mathsf W(\mu,\theta)
  =
  \{\theta'\in\mu:\kappa(\theta')\ne\kappa(\theta)\}\cup\{\theta\}.
\end{equation}
If several candidates mature at the same time, they are applied in decreasing
$\widehat\Delta_t$ with a fixed lexicographic tie-breaker. Thus the memory
contains at most one value for each action-context-target-delay key.
Admission is determined by the four gates; planner design does not enter the
definition of mechanism memory.
For a latent context, the sets in Eq.~\eqref{eq:empirical-frequency} are
instead indexed by the learner-known assignment to the setup or control
regime; membership never uses $c(\Xi_s)$ itself. The resulting context gate is
defined in Section~\ref{sec:hidden-context-setup}.

\begin{remark}
Ground-truth mechanism labels are not supplied during training. Supervision is
created by controlled interventions whose contrasts identify a target,
context, value, or delay coordinate. One-step predictive self-supervision uses
$o_{t+1}$ as the target given $(o_t,a_t)$; here the retained target is the
intervention-specific tuple $(a,c,y,v,\delta)$ queried after learning.
\end{remark}

\subsection{Readout Filtering}
If $Y$ and $R$ are synchronized, the two SCMs
\[
  a\to Y,\ R:=Y
  \qquad\text{and}\qquad
  a\to R,\ Y:=R
\]
can produce the same observations. Causal Core therefore excludes readout
candidates from direct-target writing unless an intervention separates them.
Formally, for a candidate target $u$, define the direct-target set
\[
  \mathcal Y_t^{\mathrm{dir}}
  =
  \{u\in\mathcal Y:
  u\notin\mathcal R_t\ \text{or}\ \mathrm{Sep}_t(u)=1\},
\]
\[
  \mathrm{Sep}_t(u)
  =
  \mathbf 1\{\exists b:
  P(X_{t+1}^u\mid do(b))
  \neq
  P(X_{t+1}^{r(u)}\mid do(b))\},
\]
where $r(u)$ denotes the synchronized readout paired with $u$ when known.
For formula readouts, the finite implementation uses, for each candidate
coordinate $u$, a class $\mathcal F_u$ of formulas whose inputs exclude $u$.
It computes
\[
  \widehat A_t(f,u)
  =\frac1{m_t}\sum_{r=1}^{m_t}
   \mathbf 1\{f(X_r)=X_r^u\},
  \qquad
  u\in\mathcal R_t
  \Longleftrightarrow
  \max_{f\in\mathcal F_u}\widehat A_t(f,u)\ge c_{\mathrm{ro}}.
\]
Proposition~\ref{prop:readout-filter-bound} controls both directions of this
decision under a finite-class margin. A variable classified as a readout can
re-enter the direct-target set only through a separating intervention.
The readout gate is
\[
  \Gamma_t^{\mathrm{readout}}(a,c,y,v,\delta)
  =
  \mathbf 1\{y\in\mathcal Y_t^{\mathrm{dir}}\}.
\]

\subsection{Temporal Credit}
For delayed effects, Causal Core tests stability of the lag:
\[
  \widehat\delta(a,c,y,v)
  \in
  \arg\max_{0\le\delta\le D}
  \widehat\Delta_t(a,c,y,v,\delta).
\]
An entry is written only if the selected lag beats competing recent actions
by a margin.
The analyzed margin form is
\[
  \widehat\Delta_t(a,c,y,v,\delta)
  -
  \max_{(a',v',\delta')\neq(a,v,\delta)}
  \widehat\Delta_t(a',c,y,v',\delta')
  \ge \gamma_{\mathrm{time}},
\]
where every contrast uses only matured paired trials from
Eq.~\eqref{eq:empirical-frequency}. This prevents a waiting action or a
post-effect observation from being written as the cause.

\subsection{Diagnostic Adaptation}
After transfer, mechanism $j$ produces binary surprise samples
$X_{j,r}\in\{0,1\}$. Causal Core computes
\[
  \overline X_j(m)=\frac1m\sum_{r=1}^m X_{j,r}
\]
and retests entries above a threshold. The edit is local:
$\widehat\theta_j^t$ may change only for selected $j$.
For $\tau=(p+q)/2$,
\begin{equation}
\label{eq:diagnostic-selection}
  \widehat B_m=\{j:\overline X_j(m)\ge \tau\}.
\end{equation}
\begin{equation}
\label{eq:diagnostic-entry-update}
  \widehat\theta_j^t=
  \begin{cases}
    \phi(\theta_j^s), & j\notin \widehat B_m,\\
    \widetilde\theta_j, & j\in \widehat B_m.
  \end{cases}
\end{equation}
For a selected entry, the identifying retest gives an independent local sample
$D_j^{\mathrm{re}}=\{(Z_{j,r},U_{j,r})\}_{r=1}^{r_j}$, where $U_{j,r}$ is
the observed response statistic specified by the intervention protocol, and a finite candidate
class $\mathcal N_j$. Let $g_\theta$ be the probe-answer map predicted by
candidate $\theta$. Define
\begin{equation}
\label{eq:local-retest}
\begin{aligned}
  \widehat L_j(\theta)
  &=
  \frac1{r_j}
  \sum_{r=1}^{r_j}
  \mathbf 1\{g_\theta(Z_{j,r})\ne U_{j,r}\},\\
  \widetilde\theta_j
  &\in
  \arg\min_{\theta\in\mathcal N_j}\widehat L_j(\theta).
\end{aligned}
\end{equation}
The corresponding local edit is
\begin{equation}
\label{eq:local-edit}
  \mathsf U_j(\mu,\widetilde\theta_j)
  =
  \{\theta_\ell\in\mu:\ell\ne j\}\cup\{\widetilde\theta_j\}.
\end{equation}
All entries outside $\widehat B_m$ are held fixed. Under
Eq.~\eqref{eq:local-edit}, stable corruption can occur only through the
set-selection event $\widehat B_m\neq B$.

\subsection{Hidden-Context Setup}
\label{sec:hidden-context-setup}
A hidden gate is not written from passive frequency alone. The module
actively searches setup actions that increase eligibility, then tests a
visible-separator class $\mathcal G$. A hidden label survives only when a
controlled setup regime and its matched control have separated success rates
and no visible separator explains the pattern.
For candidate $i$, let $E_i$ be latent eligibility and $Z_i$ success. The
learner observes the assigned regime, not $E_i$. It seeks a setup policy
$\pi_i^{\mathrm{set}}$ and a matched control $\pi_i^0$ such that
\[
  \mathbb P_{\pi_i^{\mathrm{set}}}(E_i=1)\ge 1-\epsilon_{\mathrm{set}},
  \qquad
  \mathbb P_{\pi_i^0}(E_i=1)\le\epsilon_0,
\]
\[
  \mathbb E_{\pi_i^{\mathrm{set}}}Z_i
  -\mathbb E_{\pi_i^0}Z_i
  \ge 2\gamma .
\]
With $m_i^{\mathrm{set}},m_i^0>0$ outcomes assigned to the two regimes, the
context gate uses
\[
  \widehat d_i
  =\frac1{m_i^{\mathrm{set}}}\sum_{r\in\mathcal D_i^{\mathrm{set}}}Z_{i,r}
   -\frac1{m_i^0}\sum_{r\in\mathcal D_i^0}Z_{i,r},
  \qquad
  \Gamma_i^{\mathrm{ctx}}
  =\mathbf 1\{\widehat d_i\ge\gamma,\ \widehat g_i=\varnothing\},
\]
where $\widehat g_i$ is a visible separator found in $\mathcal G$. The label
is retained only when the contrast passes and no visible predicate explains
it. Lemma~\ref{lem:paired-contrast} controls the contrast error, while
Proposition~\ref{prop:setup-cost} converts repeated setup reachability into
action cost when setup completion is observable. If passive access gives
$\mathbb P(E_i=1)=q_i$, then even an oracle that marks eligible trials needs
$m/q_i$ attempts on average to collect $m$ eligible outcomes.

\begin{algorithm}[t]
\caption{Causal Core memory update and adaptation}
\label{alg:causal-core-main}
\begin{algorithmic}[1]
\STATE Initialize $\mu$, readout set $\mathcal R$, buffer $B$, scores $d_j=0$.
\FOR{each interaction step $t$}
  \STATE Observe $o_t$, execute $a_t$, and observe the next response.
  \STATE Update $\mathcal R$ using readout formulas and separating
  interventions.
  \STATE Add candidate effects to $B$ with action, context, target, value,
  and delay fields.
  \FOR{each buffered candidate $(a,c,y,v,\delta)$ whose delay has matured}
    \IF{$(a,c,y,v,\delta)\in C_t^{\mathrm{adm}}$ from Eq.~\eqref{eq:admitted-set}}
      \STATE Update $\mu$ by Eq.~\eqref{eq:memory-write}.
    \ENDIF
  \ENDFOR
  \STATE For transferred entries, accumulate diagnostic surprise
  $d_j\leftarrow d_j+X_{j,t}$.
\ENDFOR
\STATE Select shifted candidates by Eq.~\eqref{eq:diagnostic-selection}
or top-$s$ diagnostic scores.
\FOR{each selected mechanism index $j$}
  \STATE Retest entry $j$ and compute $\widetilde\theta_j$ by
  Eq.~\eqref{eq:local-retest}.
  \STATE Update $\mu$ by Eq.~\eqref{eq:local-edit}.
\ENDFOR
\RETURN fixed memory $\mu$.
\end{algorithmic}
\end{algorithm}

\section{Theoretical Results}
\label{sec:theory}

The results connect interface fibers, frozen-state probe error, and the number
of target-domain retests. Theorem~\ref{thm:retention-factorization} gives an
exact factorization criterion and the optimal probe risk.
Corollary~\ref{cor:finite-interface-lower} gives the finite-sample two-model
bound. Theorem~\ref{thm:list-adaptation-lower} gives the exact posterior
coverage of budgeted retesting, while
Propositions~\ref{prop:diagnostic-selection-main} and
\ref{prop:local-retest-main} give constructive rates under the stated margins.
Theorem~\ref{thm:end-to-end-risk-main} identifies the unique support of an
exact target update and composes the finite-sample events.
Theorem~\ref{thm:metric-retention} gives the metric-valued extension. The core
results follow by conditional decision minimization, fiber factorization, and
an exact edit identity. Standard testing inequalities enter only the
finite-sample corollaries. Auxiliary separations and consequences appear in
Appendix~\ref{app:auxiliary-statements}. Let
$M_0\sim_{\mathcal I,\Pi}M_1$ denote equality of the interface law under
protocol $\Pi$, and let $M_0\sim_{\mathcal Z}M_1$ denote equality of all
fixed probe answers on $\mathcal Z$.
All logarithms are natural.

\begin{theorem}
\label{thm:retention-factorization}
Let $\mathcal M$ and $\mathcal Z$ be finite, let $\nu$ and $Q$ have full
support, and use the variables in Eq.~\eqref{eq:retention-bayes-risk}. Then the
following statements hold.
\begin{enumerate}[label=(\roman*),leftmargin=1.7em]
  \item The probe Bayes risk has the exact representation
  \[
    \mathfrak B_{\nu,Q}(\mathcal I,\Pi)
    =\inf_g\mathbb P\{g(\mathsf T_{\mathcal I},Z)\ne U\}.
  \]
  The infimum is attained by a deterministic decoder.

  \item $\mathfrak B_{\nu,Q}(\mathcal I,\Pi)=0$ if and only if
  \[
    M_0\sim_{\mathcal I,\Pi}M_1
    \Longrightarrow
    M_0\sim_{\mathcal Z}M_1
    \qquad \forall M_0,M_1\in\mathcal M .
  \]
  Hence a zero-risk population decoder exists exactly when the interface
  quotient refines the probe quotient.

  \item If two interface-equivalent models disagree on a set of probes of
  $Q$-mass $\rho$, every population decoder has worst-case mechanism risk at
  least $\rho/2$.

  \item Let $D_N$ be a training transcript whose conditional law given $W$
  depends on $W$ only through $\mathsf T_{\mathcal I}$, and let $S$ be any
  randomized function of $D_N$. Every frozen-state decoder satisfies
  \[
    \mathbb P\{d(S,Z)\ne U\}
    \ge \mathfrak B_{\nu,Q}(\mathcal I,\Pi).
  \]

  \item If $\mathsf T_1=h(\mathsf T_2)$, then
  $\mathfrak B(\mathsf T_2)\le\mathfrak B(\mathsf T_1)$. For a vector answer,
  \begin{equation}
    \max_{j\in[d]}\mathfrak B_j
    \le \mathfrak B_{\nu,Q}(\mathcal I,\Pi)
    \le \sum_{j=1}^d\mathfrak B_j .
    \label{eq:bayes-coordinate-sandwich}
  \end{equation}
\end{enumerate}
\end{theorem}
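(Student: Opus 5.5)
The plan is to treat everything as a finite Bayes decision problem. Because $\mathcal M$ and $\mathcal Z$ are finite, the pair $(\mathsf T_{\mathcal I},Z)$ takes finitely many values. Indeed, $\mathsf T_{\mathcal I}=\tau_{\mathcal I,\Pi}(W)$ is a deterministic function of $W$, so it takes at most $|\mathcal M|$ distinct laws as values.

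\emph{Part (i).} Condition on $(\mathsf T_{\mathcal I},Z)=(t,z)$. For any randomized decoder $g$, the conditional error is $1-\sum_u \mathbb P(g=u\mid t,z)\,\mathbb P(U=u\mid t,z)\ge 1-\max_u\mathbb P(U=u\mid t,z)$. Equality holds for the deterministic MAP rule $g^\star(t,z)\in\arg\max_u\mathbb P(U=u\mid t,z)$, with ties broken by a fixed order. Taking expectations gives the representation, and the infimum is attained by $g^\star$.

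\emph{Part (ii).} By full support of $\nu$ and $Q$, every atom $(t,z)$ with $t=\tau(M)$ has positive probability. Hence $\mathfrak B=0$ iff for each such atom the posterior of $U=G_W(z)$ given $\mathsf T=t$ is a point mass. The event $\{\mathsf T=t\}$ is exactly the interface fiber $\{M:M\sim_{\mathcal I,\Pi}M_0\}$, and the posterior of $W$ on it has full support. So the posterior is degenerate for all $z$ iff $G_M(z)$ is constant on every fiber for every $z$, which is the refinement statement. The "zero-risk decoder" sentence then follows from (i).

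\emph{Part (iii).} Take $M_0\sim_{\mathcal I,\Pi}M_1$. Any decoder receives the same input $\tau(M_0)=\tau(M_1)$, so for each probe $z$ in the disagreement set $A$ it outputs one answer $g(\tau,z)$. That answer can match at most one of $G_{M_0}(z)\neq G_{M_1}(z)$, which gives $\mathbf 1\{\text{err}_0(z)\}+\mathbf 1\{\text{err}_1(z)\}\ge 1$ on $A$. Integrating against $Q$, $R_Q(M_0)+R_Q(M_1)\ge\rho$, so $\max_i R_Q(M_i)\ge\rho/2$. For a randomized decoder the same bound holds after taking expectations.

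\emph{Part (iv).} This is the step needing most care. The hypothesis gives a Markov chain $W\to\mathsf T_{\mathcal I}\to D_N\to S$. Since $Z$ is independent of $W$ and of the training randomness (the probe is drawn after freezing), we also have $(W,Z)\to(\mathsf T_{\mathcal I},Z)\to(S,Z)$. Hence $U=G_W(Z)$ is conditionally independent of $S$ given $(\mathsf T_{\mathcal I},Z)$. Then $d(S,Z)$ is a randomized decoder of $(\mathsf T_{\mathcal I},Z)$: sample $D_N$ from its kernel given $\mathsf T$ and apply $S$ and $d$. Part (i) over randomized decoders gives the bound. I will state explicitly that $Z$ is drawn independently of $D_N$, since this is the independence the argument uses.

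\emph{Part (v).} Monotonicity is the same garbling argument: a decoder using $\mathsf T_1=h(\mathsf T_2)$ is a decoder of $\mathsf T_2$, so the infimum over the larger class is smaller. For the sandwich, the lower bound holds because a decoder $g$ of the vector $U$ yields a decoder $\pi_j g$ of $U_j$ whose error event is contained in $\{g\neq U\}$. Taking $g=g^\star$ gives $\mathfrak B_j\le\mathfrak B$ for each $j$. The upper bound comes from the coordinatewise MAP decoder $(g_1^\star,\dots,g_d^\star)$: by a union bound $\mathbb P\{g\neq U\}\le\sum_j\mathbb P\{g_j^\star\neq U_j\}=\sum_j\mathfrak B_j$, and (i) bounds $\mathfrak B$ above by any decoder's error.
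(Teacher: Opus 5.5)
Your proposal is correct and follows the paper's proof essentially step for step. You use MAP decoding conditional on $(\mathsf T_{\mathcal I},Z)$ for (i), point-mass posteriors on full-support interface fibers for (ii), the pointwise two-model indicator bound for (iii), composition with the garbling kernel for (iv), and decoder inclusion plus the coordinatewise union bound for (v). In (iv) you explicitly require that $Z$ be drawn independently of $(W,D_N)$. The paper's garbling argument uses this independence without stating it, so making it explicit is a worthwhile clarification.
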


\begin{proof}
Write $\mathsf T=\mathsf T_{\mathcal I}$. Conditional on
$(\mathsf T,Z)=(t,z)$, a decoder returning $u$ has error
$1-\mathbb P(U=u\mid t,z)$. Pointwise minimization over $u$, followed by
expectation over $(\mathsf T,Z)$, proves (i).

The risk in (i) is zero exactly when every conditional distribution
$P(U\mid\mathsf T=t,Z=z)$ is a point mass. Full support of $\nu$ and $Q$ turns
this into a pointwise statement. Thus, whenever
$\tau_{\mathcal I,\Pi}(M_0)=\tau_{\mathcal I,\Pi}(M_1)$,
\[
  G_{M_0}(z)
  =g(\tau_{\mathcal I,\Pi}(M_0),z)
  =g(\tau_{\mathcal I,\Pi}(M_1),z)
  =G_{M_1}(z)
\]
for every $z\in\mathcal Z$. This proves the forward implication in (ii).
Conversely, suppose the quotient implication holds. Every interface class
$C=\{M:\tau_{\mathcal I,\Pi}(M)=t\}$ has a unique probe map $g_C$, so define
$g(t,z)=g_C(z)$. Then $U=g(\mathsf T,Z)$ and the Bayes risk is zero.

For (iii), let
$A=\{z:G_{M_0}(z)\ne G_{M_1}(z)\}$ with $Q(A)=\rho$. Under the common
interface descriptor, any randomized decoder has the same output law in the
two models. Pointwise on $A$,
\[
  \mathbf 1\{\widehat G(z)\ne G_{M_0}(z)\}
  +
  \mathbf 1\{\widehat G(z)\ne G_{M_1}(z)\}
  \ge1.
\]
Expectation over decoder randomness and $z\sim Q$ gives
\[
  R_Q(\widehat G,M_0)+R_Q(\widehat G,M_1)\ge\rho,
\]
and hence $\max_iR_Q(\widehat G,M_i)\ge\rho/2$.

For (iv), the transcript condition makes $S$ a randomized garbling of
$\mathsf T$. Composing $d(S,Z)$ with the garbling kernel produces a randomized
decoder from $(\mathsf T,Z)$. Randomization cannot improve the pointwise
minimum in (i), proving the bound.

For (v), every decoder based on $\mathsf T_1=h(\mathsf T_2)$ is also a decoder
based on $\mathsf T_2$, so the latter infimum is no larger. Any joint decoder
induces a decoder for coordinate $j$; therefore its vector error is at least
$\mathfrak B_j$, proving the lower bound in
Eq.~\eqref{eq:bayes-coordinate-sandwich}. Conversely, concatenate the
coordinatewise Bayes decoders. Its vector error is contained in the union of
their coordinate errors, and the union bound proves the upper inequality.
\end{proof}

For a uniform two-model fiber and a point-mass probe on which the answers
differ, the probe Bayes risk is $1/2$. A probe-revealing interface has risk
zero.

With finite samples, total variation controls the residual indistinguishability
of two probe-distinct models.

\begin{corollary}
\label{cor:finite-interface-lower}
Let $P_0^N$ and $P_1^N$ be the $N$-sample transcript laws of two SCMs whose
probe answers differ on a set of $Q$-mass $\rho$. Then every estimator obeys
\[
  \max_{i\in\{0,1\}}R_Q(\widehat G,M_i)
  \ge
  \frac{\rho}{2}
  \left(1-\lVert P_0^N-P_1^N\rVert_{\mathrm{TV}}\right).
\]
If the samples are conditionally independent with one-sample laws $P_0,P_1$
and
$d=\min\{\mathrm{KL}(P_0\Vert P_1),\mathrm{KL}(P_1\Vert P_0)\}$, then
\[
  \max_iR_Q(\widehat G,M_i)
  \ge
  \frac{\rho}{2}
  \left[1-\sqrt{\frac{Nd}{2}}\right]_+ .
\]
\end{corollary}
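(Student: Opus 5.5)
The plan is a Le Cam two-point argument that extends part (iii) of Theorem~\ref{thm:retention-factorization} from exact interface equivalence to approximate indistinguishability. Let $A=\{z:G_{M_0}(z)\neq G_{M_1}(z)\}$, so $Q(A)=\rho$. An estimator $\widehat G$ is a (possibly randomized) map from the $N$-sample transcript $D_N$ to an answer map. For a fixed probe $z\in A$, the answers $G_{M_0}(z)$ and $G_{M_1}(z)$ differ, so
\[
  \mathbf 1\{\widehat G(z)\ne G_{M_0}(z)\}+\mathbf 1\{\widehat G(z)\ne G_{M_1}(z)\}\ge 1 .
\]
This is the same pointwise inequality used in the proof of (iii). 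The difference is that the estimator's output now has law $P_0^N$ or $P_1^N$ depending on the model, rather than a common law.

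\textbf{Step 1 (fixed probe).} For fixed $z\in A$, define $E_z=\{D:\widehat G(D)(z)=G_{M_1}(z)\}$, absorbing decoder randomness into the kernel. The error under $M_0$ is at least $P_0^N(E_z)$, and the error under $M_1$ is $1-P_1^N(E_z)$. Their sum is at least
\[
  1-\bigl(P_1^N(E_z)-P_0^N(E_z)\bigr)\ge 1-\lVert P_0^N-P_1^N\rVert_{\mathrm{TV}} .
\]
Randomized estimators are handled by extending the bound to test functions $f\in[0,1]$, using $\mathbb E_1f-\mathbb E_0f\le\mathrm{TV}$.

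\textbf{Step 2 (average over probes).} Integrate the Step~1 bound over $z\sim Q$ restricted to $A$, and drop the nonnegative contribution from $A^c$. This gives
\[
  R_Q(\widehat G,M_0)+R_Q(\widehat G,M_1)\ge\rho\bigl(1-\lVert P_0^N-P_1^N\rVert_{\mathrm{TV}}\bigr).
\]
Since the maximum of the two risks is at least their average, the first inequality follows. The one point to check is Fubini for exchanging the $Q$-expectation with the transcript expectation. This is trivial here because $\mathcal Z$ is finite, and in general it is fine because all integrands are bounded indicators.

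\textbf{Step 3 (KL form).} Apply Pinsker's inequality in whichever direction attains the minimum, since TV is symmetric:
\[
  \lVert P_0^N-P_1^N\rVert_{\mathrm{TV}}\le\sqrt{\mathrm{KL}(P_i^N\Vert P_{1-i}^N)/2}.
\]
Conditional independence tensorizes the divergence, giving $\mathrm{KL}(P_0^N\Vert P_1^N)=N\,\mathrm{KL}(P_0\Vert P_1)$ and likewise in the other direction. Hence $\mathrm{TV}\le\sqrt{Nd/2}$. Taking the positive part is harmless because risks are nonnegative.

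The only real subtlety is interpretive: making precise that $\widehat G$ depends on the world solely through the transcript, with randomization independent of the model. This is what justifies the Step~1 coupling.
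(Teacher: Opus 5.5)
Your proposal is correct and follows the paper's proof essentially step for step. Your event $E_z$ is the paper's $A_1(z)=\{\widehat G(z)=g_1(z)\}$, and the fixed-probe bound $P_0^N(E_z)+1-P_1^N(E_z)\ge 1-\lVert P_0^N-P_1^N\rVert_{\mathrm{TV}}$ is integrated over the disagreement set and halved; Pinsker in the better direction plus KL tensorization then gives the second display, and your explicit handling of internal randomization via $[0,1]$-valued test functions is a small point the paper leaves implicit.
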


\begin{proof}
Fix a probe $z$ on which the answers $g_0(z)$ and $g_1(z)$ differ. Let
$A_i(z)=\{\widehat G(z)=g_i(z)\}$. The events $A_0(z)$ and $A_1(z)$ are
disjoint, hence
\[
\begin{aligned}
 P_0^N(A_0(z)^c)+P_1^N(A_1(z)^c)
 &\ge P_0^N(A_1(z))+P_1^N(A_1(z)^c)\\
 &=1-\{P_1^N(A_1(z))-P_0^N(A_1(z))\}\\
 &\ge1-\lVert P_0^N-P_1^N\rVert_{\mathrm{TV}}.
\end{aligned}
\]
Integrating over the disagreement set and dividing the sum of the two risks by
two proves the first inequality. For product laws,
$\mathrm{KL}(P_0^N\Vert P_1^N)=N\mathrm{KL}(P_0\Vert P_1)$. Pinsker's
inequality \citep{tsybakov2009introduction}, applied in the better of the two
directions, gives
$\lVert P_0^N-P_1^N\rVert_{\mathrm{TV}}\le\sqrt{Nd/2}$; truncation at zero
gives the second display.
\end{proof}

The proof is decision-theoretic: it does not pass through entropy, mutual
information, or a decoder-class restriction.

Risk decompositions, standard-target lower bounds, and explicit two-world
constructions for ranking, readout, and hidden-context interfaces appear in
Appendix~\ref{app:auxiliary-statements}.

\begin{theorem}
\label{thm:list-adaptation-lower}
Let $\mathcal B_s=\{b\subseteq[n]:|b|=s\}$, let $B\in\mathcal B_s$, and let
$Z$ be any pre-retest diagnostic signal. For $s\le k<n$, define
\begin{equation}
  \Gamma_k(z)
  =
  \max_{L\subseteq[n]:\,|L|\le k}
  \sum_{b\in\mathcal B_s:\,b\subseteq L}
  \mathbb P(B=b\mid Z=z).
  \label{eq:posterior-coverage}
\end{equation}
Then the largest probability that a $Z$-measurable retest list contains every
shifted entry is
\begin{equation}
  \sup_{\mathcal L:\,|\mathcal L(Z)|\le k}
  \mathbb P\{B\subseteq\mathcal L(Z)\}
  =\mathbb E\Gamma_k(Z).
  \label{eq:optimal-retest-coverage}
\end{equation}
Internal randomization cannot increase this value. Zero omission is possible
if and only if, for almost every $z$,
\begin{equation}
  \left|\bigcup\{b:\mathbb P(B=b\mid Z=z)>0\}\right|\le k.
  \label{eq:retest-support-condition}
\end{equation}
If $B$ is uniform on $\mathcal B_s$ and independent of $Z$, the optimal
coverage is exactly $\binom{k}{s}/\binom{n}{s}$. Finally, if
$Z_1=h(Z_2)$, then $\mathbb E\Gamma_k(Z_1)\le\mathbb E\Gamma_k(Z_2)$.
\end{theorem}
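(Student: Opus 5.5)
The plan is to reduce everything to pointwise conditional maximization, mirroring the proof of Theorem~\ref{thm:retention-factorization}(i). Fix a deterministic $Z$-measurable list rule $\mathcal L$ with $|\mathcal L(z)|\le k$. Conditioning on $Z=z$ gives
\[
  \mathbb P\{B\subseteq\mathcal L(Z)\mid Z=z\}
  =\sum_{b\in\mathcal B_s:\,b\subseteq\mathcal L(z)}\mathbb P(B=b\mid Z=z)
  \le\Gamma_k(z).
\]
Taking expectations yields the upper bound $\mathbb E\Gamma_k(Z)$.

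For attainment, choose for each $z$ a maximizing list $L^\star(z)$ in Eq.~\eqref{eq:posterior-coverage}. The maximum exists because the family of lists is finite. A fixed lexicographic tie-break makes $L^\star$ measurable; $Z$ may take uncountably many values, but the map is a finite argmax of measurable functions $z\mapsto\mathbb P(B=b\mid Z=z)$. For a randomized rule $\mathcal L(Z,\xi)$ with $\xi$ independent of $(B,Z)$, the conditional coverage given $(Z,\xi)=(z,x)$ is again at most $\Gamma_k(z)$, since the conditional law of $B$ given $(Z,\xi)$ equals that given $Z$. Hence internal randomization cannot help. This establishes Eq.~\eqref{eq:optimal-retest-coverage}.

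For the zero-omission characterization, write $U(z)$ for the union in Eq.~\eqref{eq:retest-support-condition}. First suppose $|U(z)|\le k$. Then we may take $L=U(z)$, which covers every $b$ with positive posterior, so $\Gamma_k(z)=1$. Since the posterior of $B$ is a discrete law on a finite set, the $b$ with positive posterior exhaust its mass. Conversely, suppose optimal coverage equals one. Because $\Gamma_k\le1$, this forces $\Gamma_k(z)=1$ for a.e.\ $z$. Then some $L$ with $|L|\le k$ contains every $b$ of positive posterior mass, so $U(z)\subseteq L$ and $|U(z)|\le k$. I expect this step to be the only delicate point: one must be explicit that ``zero omission is possible'' means the supremum is attained with value one, which the attainment argument above supplies.

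For the uniform independent case, the posterior equals the prior $\binom{n}{s}^{-1}$, so $\Gamma_k(z)$ is the maximum number of $s$-subsets contained in a set of size at most $k$, divided by $\binom ns$. Since $\binom{|L|}{s}$ is monotone in $|L|$, this number is $\binom ks$, using $s\le k<n$. Finally, for $Z_1=h(Z_2)$, every $Z_1$-measurable list rule $\mathcal L(h(Z_2))$ is $Z_2$-measurable, so the supremum over $Z_2$-rules is at least as large. Combined with Eq.~\eqref{eq:optimal-retest-coverage}, this gives $\mathbb E\Gamma_k(Z_1)\le\mathbb E\Gamma_k(Z_2)$, exactly as in Theorem~\ref{thm:retention-factorization}(v).
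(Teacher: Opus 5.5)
Your proposal is correct and follows essentially the same route as the paper's proof: pointwise maximization of the conditional coverage $\sum_{b\subseteq L(z)}\mathbb P(B=b\mid Z=z)$, and a mixture argument for randomized lists. The remaining steps also match: the union-of-posterior-support characterization, the $\binom{\ell}{s}$ count in the uniform case, and the inclusion of $h(Z_2)$-measurable rules for monotonicity. Two points are explicit in your version and only implicit in the paper, and both are sound. You fix a lexicographic tie-break so the maximizing list is measurable, and you note that attainment is what makes ``zero omission is possible'' equivalent to $\mathbb E\Gamma_k(Z)=1$.
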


\begin{proof}
For a deterministic list $L(z)$, conditional coverage is
\[
  \mathbb P\{B\subseteq L(z)\mid Z=z\}
  =\sum_{b\subseteq L(z)}\mathbb P(B=b\mid Z=z).
\]
Maximizing this finite sum separately for every $z$ gives
Eq.~\eqref{eq:posterior-coverage}; averaging gives
Eq.~\eqref{eq:optimal-retest-coverage}. A randomized rule is a mixture of
deterministic lists, so its conditional coverage is a convex combination of
values no larger than $\Gamma_k(z)$.

The equality $\Gamma_k(z)=1$ holds precisely when one list of size at most
$k$ contains every $b$ with positive posterior mass. Such a list exists
precisely when the union of those posterior-support sets has size at most
$k$, proving Eq.~\eqref{eq:retest-support-condition}. Under the uniform
independent law, a list of size $\ell$ contains exactly $\binom{\ell}{s}$ of
the $\binom ns$ possible shifted sets. This is maximized at $\ell=k$.
Finally, every list rule based on $Z_1=h(Z_2)$ is also available to a rule
based on $Z_2$; taking suprema in
Eq.~\eqref{eq:optimal-retest-coverage} proves monotonicity.
\end{proof}

\begin{proposition}
\label{prop:diagnostic-selection-main}
For each transferred entry $j\in[n]$, suppose the diagnostic samples
$X_{j,1},\ldots,X_{j,m}\in[0,1]$ are independent. Suppose also that
$\mathbb E X_{j,r}\ge p$ for shifted entries $j\in B$ and
$\mathbb E X_{j,r}\le q$ for stable entries $j\notin B$, with
$\Delta=p-q>0$. Let
$\tau=(p+q)/2$ and
\[
  \overline X_j=\frac1m\sum_{r=1}^m X_{j,r},
  \qquad
  \widehat B=\{j:\overline X_j\ge\tau\}.
\]
If $m\ge 2\Delta^{-2}\log(n/\delta)$, then
$\mathbb P(\widehat B=B)\ge1-\delta$.
Moreover, in the symmetric one-shift Bernoulli family with
$p=1/2+\Delta/2$, $q=1/2-\Delta/2$, and $0<\Delta\le1/2$, every estimator
with localization error at most $\varepsilon$ requires
\[
  m\ge
  \frac{(1-\varepsilon)\log n-\log2}{8\Delta^2}.
\]
Thus the $\Delta^{-2}\log n$ scaling is necessary and sufficient up to
constants.
\end{proposition}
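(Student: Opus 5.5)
The proof has two halves: an upper bound from Hoeffding plus a union bound, and a lower bound from Fano's inequality over $n$ hypotheses.

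\emph{Upper bound.} For a shifted entry $j\in B$, $\mathbb E\overline X_j\ge p=\tau+\Delta/2$. The variables $X_{j,r}$ are independent and lie in $[0,1]$, so Hoeffding's inequality gives
\[
\mathbb P(\overline X_j<\tau)\le\exp(-2m(\Delta/2)^2)=\exp(-m\Delta^2/2).
\]
For a stable entry $j\notin B$, $\mathbb E\overline X_j\le q=\tau-\Delta/2$, and the same inequality bounds $\mathbb P(\overline X_j\ge\tau)$ by the same quantity. The event $\widehat B\ne B$ is the union over $j\in[n]$ of these $n$ misclassification events. A union bound gives
\[
\mathbb P(\widehat B\ne B)\le n\exp(-m\Delta^2/2),
\]
which is at most $\delta$ once $m\ge2\Delta^{-2}\log(n/\delta)$. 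No independence across entries is needed, because only the union bound is used.

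\emph{Lower bound.} Let the shifted entry $J$ be uniform on $[n]$, so $B=\{J\}$ and there are $n$ hypotheses $P_1,\dots,P_n$ on the full sample array $\{X_{i,r}\}$. Under $P_j$, entry $j$ has Bernoulli$(p)$ samples and every other entry has Bernoulli$(q)$ samples, all independent.

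\begin{enumerate}
\item Fix the reference $P_0$ in which every entry is Bernoulli$(q)$. Then $P_j$ differs from $P_0$ only in row $j$, so
\[
\mathrm{KL}(P_j\Vert P_0)=m\,\mathrm{KL}(\mathrm{Ber}(p)\Vert\mathrm{Ber}(q)).
\]
\item Bound this by the $\chi^2$ divergence:
\[
\mathrm{KL}\le\chi^2=\frac{(p-q)^2}{q(1-q)}=\frac{\Delta^2}{q(1-q)}.
\]
Since $\Delta\le1/2$, we have $q=1/2-\Delta/2\ge1/4$ and $1-q\ge1/2$. Hence $q(1-q)\ge(1/4)(3/4)=3/16$, and therefore $\chi^2\le16\Delta^2/3\le 8\Delta^2$.
\item Bound the mutual information by the average divergence to the reference: $I(J;\text{data})\le\frac1n\sum_j\mathrm{KL}(P_j\Vert P_0)\le 8m\Delta^2$.
\item Fano's inequality says that any estimator with error probability at most $\varepsilon$ satisfies
\[
(1-\varepsilon)\log n-\log2\le I(J;\text{data})\le8m\Delta^2,
\]
which rearranges to the displayed bound on $m$.
\item The worst-case error over $j$ is at least the error under the uniform prior, so the bound holds for every estimator whose worst-case localization error is at most $\varepsilon$.
\end{enumerate}

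The main obstacle is matching the constant $8$. If the $\chi^2$ route were too loose, the fallback would be to compare $P_j$ against $P_k$ for $j\ne k$. That pair differs in two rows, giving
\[
\mathrm{KL}(P_j\Vert P_k)=m[\mathrm{KL}(p\Vert q)+\mathrm{KL}(q\Vert p)]\le m\cdot 2\cdot\tfrac{16}{3}\Delta^2,
\]
and using the convexity bound $I\le\max_{j,k}\mathrm{KL}(P_j\Vert P_k)$ would give a slightly worse constant. The single-reference route above already yields $16/3\le8$, so it suffices. Combining both halves shows that $m\asymp\Delta^{-2}\log n$ is necessary and sufficient up to constants.
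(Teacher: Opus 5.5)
Your proof is correct and follows the paper's argument: Hoeffding plus a union bound for the upper bound, and Fano's inequality under the uniform one-shift prior for the lower bound. The one difference is how you bound $I(J;X)$. You compare each $P_j$ to the all-stable reference $P_0$ and bound the single-row divergence by $\chi^2$, whereas the paper uses the pairwise bound $I(J;X)\le n^{-2}\sum_{j,k}\mathrm{KL}(P_j\Vert P_k)$ together with the exact Bernoulli divergence $\Delta\log\frac{1+\Delta}{1-\Delta}\le 4\Delta^2$ per direction. Both routes give the constant $8$. One small slip: $q\ge 1/4$ and $1-q\ge 1/2$ only give $q(1-q)\ge 1/8$, not $3/16$. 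That weaker bound still yields $\chi^2\le 8\Delta^2$, so nothing breaks.
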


\begin{proof}
For $j\in B$,
\[
\begin{aligned}
  \mathbb P\{\overline X_j<\tau\}
  \le
  \exp\{-2m(p-\tau)^2\}
  =
  \exp\{-m\Delta^2/2\}.
\end{aligned}
\]
For $j\notin B$,
\[
\begin{aligned}
  \mathbb P\{\overline X_j\ge\tau\}
  \le
  \exp\{-2m(\tau-q)^2\}
  =
  \exp\{-m\Delta^2/2\}.
\end{aligned}
\]
These two applications of Hoeffding's inequality
\citep{hoeffding1963probability}, followed by a union bound over the $n$
entries, give
$\mathbb P(\widehat B\ne B)\le n\exp\{-m\Delta^2/2\}\le\delta$. This is the
selection guarantee used by Eq.~\eqref{eq:diagnostic-selection}; the
top-$s$ variant is proved in Appendix~\ref{app:proof-top-s-main}.
The matching lower-bound calculation is given in
Appendix~\ref{app:diagnostic-lower}.
\end{proof}

Theorem~\ref{thm:list-adaptation-lower} characterizes every retest list of size
$k$ without reducing the diagnostic signal to a scalar information budget.
In contrast,
Proposition~\ref{prop:diagnostic-selection-main} recovers the shifted set with
$m=O(\Delta^{-2}\log(n/\delta))$ diagnostic samples per entry.

\begin{proposition}
\label{prop:local-retest-main}
Fix a selected entry $j$ and a finite candidate class $\mathcal N_j$. Let
$\theta_j^\star$ be the target-domain mechanism and define the population
retest loss
\[
  L_j(\theta)=
  \mathbb E\,\mathbf 1\{g_\theta(Z_j)\ne U_j\},
  \qquad \theta\in\mathcal N_j .
\]
Suppose $\theta_j^\star\in\mathcal N_j$ and
$L_j(\theta)-L_j(\theta_j^\star)\ge\eta_j>0$ for every
$\theta\ne\theta_j^\star$, and suppose the $r_j$ retest pairs are independent
and identically distributed. If the retest sample size is
\[
  r_j\ge \frac{2}{\eta_j^2}\log\frac{|\mathcal N_j|}{\delta_j},
\]
then the empirical minimizer in Eq.~\eqref{eq:local-retest} satisfies
$\mathbb P(\widetilde\theta_j=\theta_j^\star)\ge1-\delta_j$.
\end{proposition}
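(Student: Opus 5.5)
The argument is a standard finite-class empirical-risk-minimization bound: Hoeffding's inequality on each loss difference, then a union bound over the $|\mathcal N_j|-1$ wrong candidates. For each candidate $\theta\in\mathcal N_j$ with $\theta\ne\theta_j^\star$, I define the paired difference variables
\[
  D_r(\theta)=\mathbf 1\{g_\theta(Z_{j,r})\ne U_{j,r}\}-\mathbf 1\{g_{\theta_j^\star}(Z_{j,r})\ne U_{j,r}\},\qquad r=1,\ldots,r_j .
\]
These are i.i.d. across $r$, take values in $[-1,1]$, and have mean $L_j(\theta)-L_j(\theta_j^\star)\ge\eta_j$. Their average is $\widehat L_j(\theta)-\widehat L_j(\theta_j^\star)$.

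The empirical minimizer in Eq.~\eqref{eq:local-retest} can differ from $\theta_j^\star$ only if some wrong $\theta$ has $\widehat L_j(\theta)\le\widehat L_j(\theta_j^\star)$, under the worst-case tie-breaking. Equivalently, the empirical mean of $D_r(\theta)$ must fall at least $\eta_j$ below its expectation. Hoeffding's inequality for variables with range width $2$ gives
\[
  \mathbb P\{\widehat L_j(\theta)-\widehat L_j(\theta_j^\star)\le 0\}\le\exp\{-2r_j\eta_j^2/4\}=\exp\{-r_j\eta_j^2/2\}.
\]

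A union bound over the at most $|\mathcal N_j|-1<|\mathcal N_j|$ wrong candidates then bounds the failure probability by $|\mathcal N_j|\exp\{-r_j\eta_j^2/2\}$. The stated condition $r_j\ge 2\eta_j^{-2}\log(|\mathcal N_j|/\delta_j)$ makes this at most $\delta_j$.

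The one point that needs care is the choice of difference variables. Bounding each $\widehat L_j$ separately at deviation $\eta_j/2$ with range $[0,1]$ reproduces the same constant $2/\eta_j^2$, but it requires a union over all $|\mathcal N_j|$ candidates including $\theta_j^\star$, which gives an extra factor of $2$ inside the logarithm. Using the paired differences avoids that factor and matches the stated constant exactly. Ties are handled by counting $\widehat L_j(\theta)=\widehat L_j(\theta_j^\star)$ as a failure, which the bound above already covers.
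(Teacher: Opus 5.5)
Your proof is correct and is essentially the paper's own argument: the same paired differences $D_\theta\in[-1,1]$ with mean at least $\eta_j$, Hoeffding at deviation $\eta_j$ giving $e^{-r_j\eta_j^2/2}$ per wrong candidate, and a union bound over $\mathcal N_j\setminus\{\theta_j^\star\}$. One correction to your closing aside: the unpaired route does not lose a factor of $2$, because one-sided tails suffice (upper for $\widehat L_j(\theta_j^\star)$, lower for each wrong $\widehat L_j(\theta)$, each at deviation $\eta_j/2$), and these also give at most $|\mathcal N_j|e^{-r_j\eta_j^2/2}$.
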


\begin{proof}
For a wrong candidate $\theta$, set
\[
  D_\theta
  =
  \mathbf 1\{g_\theta(Z_j)\ne U_j\}
  -
  \mathbf 1\{g_{\theta_j^\star}(Z_j)\ne U_j\}.
\]
Then $D_\theta\in[-1,1]$ and
$\mathbb E D_\theta=L_j(\theta)-L_j(\theta_j^\star)\ge\eta_j$. If
$\widehat L_j(\theta)\le\widehat L_j(\theta_j^\star)$, then the empirical
mean $\overline D_\theta$ is nonpositive. For
$A_\theta=\{\widehat L_j(\theta)\le\widehat L_j(\theta_j^\star)\}$,
Hoeffding's inequality \citep{hoeffding1963probability} gives
\[
  \mathbb P(A_\theta)
  \le
  \mathbb P\{\overline D_\theta-\mathbb E D_\theta\le-\eta_j\}
  \le
  e^{-r_j\eta_j^2/2}.
\]
A union bound over $\mathcal N_j\setminus\{\theta_j^\star\}$ yields
$\mathbb P(\widetilde\theta_j\ne\theta_j^\star)
\le |\mathcal N_j|e^{-r_j\eta_j^2/2}\le\delta_j$.
\end{proof}

\begin{theorem}
\label{thm:end-to-end-risk-main}
Fix an aligned source-target pair satisfying
Condition~\ref{cond:alignment}. Write
$m_j=\phi(\theta_j^s)$ and
$B=\{j:m_j\ne\theta_j^t\}$, with $S=[n]\setminus B$. For any selected set
$D\subseteq[n]$ and local estimates $\{\widetilde\theta_j:j\in D\}$, let
\[
  \widehat\theta_j^t(D)
  =
  \begin{cases}
    m_j, & j\notin D,\\
    \widetilde\theta_j, & j\in D.
  \end{cases}
\]
Then
\begin{align}
  nL_{\mathrm{adapt}}
  &=|B\setminus D|
    +\sum_{j\in D}\mathbf 1\{\widetilde\theta_j\ne\theta_j^t\},
    \label{eq:exact-edit-decomposition}\\
  |S|L_{\mathrm{stab}}
  &=\sum_{j\in D\cap S}
    \mathbf 1\{\widetilde\theta_j\ne m_j\},
    \label{eq:stable-edit-decomposition}\\
  |B|L_{\mathrm{shift}}
  &=|B\setminus D|+
    \sum_{j\in D\cap B}
    \mathbf 1\{\widetilde\theta_j\ne\theta_j^t\}.
    \label{eq:shift-edit-decomposition}
\end{align}
Consequently the adapted memory is exact if and only if
$B\subseteq D$ and $\widetilde\theta_j=\theta_j^t$ for every $j\in D$.
Moreover, for any exact target memory $\bar\theta=\theta^t$,
\begin{equation}
  \{j:\bar\theta_j\ne m_j\}=B;
  \label{eq:unique-edit-support}
\end{equation}
thus $B$ is the unique realized edit support of an exact update relative to
the mapped source memory.

Let $F_{\mathrm{src}}$ be the event that the frozen source memory has a
missing entry or a wrong target, context, value, or delay, including a target
error caused by readout contamination. Let $F_\phi$ be schema-map failure,
$F_{\mathrm{diag}}=\{\widehat B\ne B\}$, and $F_{\mathrm{upd}}$ be failure
of any local retest on $B$. If their probabilities are bounded by
$\delta_{\mathrm{src}},\delta_\phi,\delta_{\mathrm{diag}}$, and
$\delta_{\mathrm{upd}}$, then the update in
Eq.~\eqref{eq:diagnostic-entry-update} satisfies
\[
  \mathbb P(L_{\mathrm{adapt}}>0)
  \le
  \delta_{\mathrm{src}}+\delta_\phi
  +\delta_{\mathrm{diag}}+\delta_{\mathrm{upd}}.
\]
On the complement of these events,
$L_{\mathrm{stab}}=L_{\mathrm{shift}}=L_{\mathrm{adapt}}=0$. For every
realization,
\[
  \mathbb E L_{\mathrm{adapt}}
  =
  \frac{|S|}{n}\mathbb E L_{\mathrm{stab}}
  +
  \frac{|B|}{n}\mathbb E L_{\mathrm{shift}} .
\]
\end{theorem}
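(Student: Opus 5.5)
The plan is to prove the three decompositions first by splitting the index set $[n]$ into the four cells $D\cap S$, $D\cap B$, $S\setminus D$, and $B\setminus D$, and computing the indicator $\mathbf 1\{\widehat\theta_j^t(D)\ne\theta_j^t\}$ on each cell.

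\textbf{The four cells.}
\begin{itemize}
  \item For $j\notin D$ the estimate is $m_j$. The indicator equals $\mathbf 1\{m_j\ne\theta_j^t\}$, which by definition of $B$ is $1$ exactly when $j\in B$. Summing over $[n]\setminus D$ gives $|B\setminus D|$.
  \item For $j\in D$ the indicator is $\mathbf 1\{\widetilde\theta_j\ne\theta_j^t\}$.
  \item On $D\cap S$ we have $\theta_j^t=m_j$, so this indicator can be rewritten as $\mathbf 1\{\widetilde\theta_j\ne m_j\}$.
\end{itemize}
Summing over $[n]$ gives Eq.~\eqref{eq:exact-edit-decomposition}. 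Summing over $S$ alone gives Eq.~\eqref{eq:stable-edit-decomposition}, since $S\setminus D$ contributes zero. Summing over $B$ gives Eq.~\eqref{eq:shift-edit-decomposition}. If $S$ or $B$ is empty, both sides vanish under the zero convention.

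\textbf{Exactness characterization.} Since every term in Eq.~\eqref{eq:exact-edit-decomposition} is nonnegative, $L_{\mathrm{adapt}}=0$ holds iff $|B\setminus D|=0$ and every $\widetilde\theta_j$ with $j\in D$ equals $\theta_j^t$.

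\textbf{Unique edit support.} Eq.~\eqref{eq:unique-edit-support} is immediate: if $\bar\theta=\theta^t$, then $\{j:\bar\theta_j\ne m_j\}=\{j:\theta_j^t\ne m_j\}=B$ by definition. So any exact update must edit exactly the indices in $B$, and no others.

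\textbf{Probabilistic part.} I would take $D=\widehat B$ with $\widetilde\theta_j$ from Eq.~\eqref{eq:local-retest}. Then I would argue containment: on the complement of $F_{\mathrm{src}}\cup F_\phi\cup F_{\mathrm{diag}}\cup F_{\mathrm{upd}}$, the error is zero.
\begin{itemize}
  \item Off $F_{\mathrm{src}}$ and $F_\phi$, the realized mapped source entries $m_j$ coincide with $\phi(\theta_j^s)$ for the true source memory. So the stable and shifted sets computed by the procedure are the true $S$ and $B$.
  \item Off $F_{\mathrm{diag}}$, we have $D=B$, so $|B\setminus D|=0$ and $D\cap S=\emptyset$.
  \item Off $F_{\mathrm{upd}}$, every $\widetilde\theta_j$ with $j\in B$ equals $\theta_j^t$.
  \item The three decompositions then vanish term by term, giving $L_{\mathrm{stab}}=L_{\mathrm{shift}}=L_{\mathrm{adapt}}=0$.
\end{itemize}
This yields $\{L_{\mathrm{adapt}}>0\}\subseteq F_{\mathrm{src}}\cup F_\phi\cup F_{\mathrm{diag}}\cup F_{\mathrm{upd}}$, and the union bound gives the stated inequality.

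\textbf{Expectation identity.} The final display follows from the pathwise identity $nL_{\mathrm{adapt}}=|S|L_{\mathrm{stab}}+|B|L_{\mathrm{shift}}$, obtained by adding Eqs.~\eqref{eq:stable-edit-decomposition} and \eqref{eq:shift-edit-decomposition} and comparing with Eq.~\eqref{eq:exact-edit-decomposition}. Taking expectations conditional on the fixed pair works because $|S|$ and $|B|$ are deterministic given $(\mu^s,\mu^t)$.

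\textbf{Main obstacle.} The delicate step is the bookkeeping in the probabilistic part. It must be clear that $m_j$ in the event analysis refers to the realized frozen source memory after the schema map, while $B$ is defined from the true pair. I would handle this by stating explicitly that on $F_{\mathrm{src}}^c\cap F_\phi^c$ the realized and true mapped entries agree, so the deterministic identities apply with the true $B$.
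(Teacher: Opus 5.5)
Your proposal is correct and follows the paper's proof essentially step for step. The shared steps are: the cellwise evaluation of $\mathbf 1\{\widehat\theta_j^t(D)\ne\theta_j^t\}$, nonnegativity for the exactness criterion, the definitional identity for the edit support, event containment plus a union bound, and the pathwise identity $nL_{\mathrm{adapt}}=|S|L_{\mathrm{stab}}+|B|L_{\mathrm{shift}}$ for the expectation. You also remark explicitly that on $F_{\mathrm{src}}^c\cap F_\phi^c$ the realized mapped source entries coincide with the $m_j$ that define $B$; this usefully clarifies a step the paper states only as ``the mapped source entries are correct.''
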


\begin{proof}
For $j\notin D$, the update retains $m_j$, and therefore
\[
  \mathbf 1\{\widehat\theta_j^t(D)\ne\theta_j^t\}
  =\mathbf 1\{j\in B\}.
\]
For $j\in D$, the same error indicator is
$\mathbf 1\{\widetilde\theta_j\ne\theta_j^t\}$. Summing the first identity
over $j\notin D$ and the second over $j\in D$ proves
Eq.~\eqref{eq:exact-edit-decomposition}. On $S$, $m_j=\theta_j^t$; on $B$,
$m_j\ne\theta_j^t$. Splitting the two sums accordingly proves
Eqs.~\eqref{eq:stable-edit-decomposition} and
\eqref{eq:shift-edit-decomposition}.

Every term on the right of Eq.~\eqref{eq:exact-edit-decomposition} is
nonnegative. It vanishes precisely when $B\setminus D=\varnothing$ and every
selected estimate equals its target value, giving the stated necessary and
sufficient condition. If $\bar\theta=\theta^t$, then
$\bar\theta_j\ne m_j$ holds precisely when
$\theta_j^t\ne m_j$, which is the definition of $j\in B$. This proves
Eq.~\eqref{eq:unique-edit-support}.

Now let
$E=(F_{\mathrm{src}}\cup F_\phi\cup F_{\mathrm{diag}}\cup
F_{\mathrm{upd}})^c$. On $E$, the mapped source entries are correct,
$D=\widehat B=B$, and
$\widetilde\theta_j=\theta_j^t$ for every selected $j$. The exact condition
just proved gives $L_{\mathrm{adapt}}=0$. Therefore
\[
  \{L_{\mathrm{adapt}}>0\}
  \subseteq
  F_{\mathrm{src}}\cup F_\phi\cup
  F_{\mathrm{diag}}\cup F_{\mathrm{upd}},
\]
and the probability bound follows by the union bound. The decomposition of
the mean loss follows from partitioning the exact coordinate errors:
\[
\begin{aligned}
  L_{\mathrm{adapt}}
  &=
  \frac1n\sum_{j\in S}
    \mathbf 1\{\widehat\theta_j^t\ne\theta_j^t\}
  +\frac1n\sum_{j\in B}
    \mathbf 1\{\widehat\theta_j^t\ne\theta_j^t\}\\
  &=
  \frac{|S|}{n}L_{\mathrm{stab}}
  +
  \frac{|B|}{n}L_{\mathrm{shift}} .
\end{aligned}
\]
Taking expectation gives the displayed identity.
Appendix~\ref{app:proof-end-to-end-risk} gives the coordinate-level loss
expansion and decomposes $F_{\mathrm{src}}$ into its target, context,
readout, and temporal gates.
\end{proof}

Substitution of the target-contrast, readout, hidden-setup, temporal,
diagnostic, schema, and local-update bounds into
Theorem~\ref{thm:end-to-end-risk-main} is given in
Appendix~\ref{app:proof-end-to-end-sample-main}. The corresponding top-$s$
diagnostic result appears in Appendix~\ref{app:proof-top-s-main}; its selected
empirical means recover the shifted set under the same
$\Delta^{-2}\log(n/\delta)$ scaling.

\begin{theorem}
\label{thm:metric-retention}
Let $\mathcal M$ and $\mathcal Z$ be finite, let $\nu$ and $Q$ have full
support, and let $(\mathcal U,\rho)$ be a compact metric space with continuous
$\rho$. Use Eq.~\eqref{eq:metric-retention-risk} and write
$V=(\mathsf T_{\mathcal I},Z)$. Then:
\begin{enumerate}[label=(\roman*),leftmargin=1.7em]
  \item The expected metric risk has the pointwise representation
  \[
    \mathfrak B^\rho_{\nu,Q}(\mathcal I,\Pi)
    =\mathbb E\!\left[
      \min_{a\in\mathcal U}
      \mathbb E\{\rho(U,a)\mid V\}
    \right],
  \]
  and a deterministic decoder attains the minimum.

  \item $\mathfrak B^\rho_{\nu,Q}=0$ and
  $\mathfrak R^\rho=0$ if and only if every interface fiber has one probe
  answer map; equivalently,
  $M_0\sim_{\mathcal I,\Pi}M_1$ implies
  $M_0\sim_{\mathcal Z}M_1$.

  \item If $D_N$ and $S$ satisfy the transcript condition in
  Theorem~\ref{thm:retention-factorization}, every frozen-state decoder obeys
  \[
    \mathbb E\rho\{U,d(S,Z)\}
    \ge\mathfrak B^\rho_{\nu,Q}(\mathcal I,\Pi).
  \]
  If $\mathsf T_1=h(\mathsf T_2)$, then
  $\mathfrak B^\rho(\mathsf T_2)\le
  \mathfrak B^\rho(\mathsf T_1)$ and
  $\mathfrak R^\rho(\mathsf T_2)\le
  \mathfrak R^\rho(\mathsf T_1)$.

  \item If two interface-equivalent models $M_0,M_1$ have answer maps
  $G_0,G_1$, every possibly randomized decoder of their common interface
  satisfies
  \[
    \max_{i\in\{0,1\}}
    \mathbb E_{Z\sim Q}
      \rho\{\widehat G(Z),G_i(Z)\}
    \ge
    \frac12\mathbb E_{Z\sim Q}\rho\{G_0(Z),G_1(Z)\}.
    \label{eq:metric-two-fiber-bound}
  \]
  In particular, separation by at least $2\varepsilon$ on a set of
  $Q$-mass $q$ forces worst-case metric risk at least $q\varepsilon$.
\end{enumerate}
\end{theorem}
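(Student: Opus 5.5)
The plan mirrors the proof of Theorem~\ref{thm:retention-factorization}, replacing the Hamming loss by the metric $\rho$. Because $\mathcal M$ and $\mathcal Z$ are finite, $V=(\mathsf T_{\mathcal I},Z)$ takes finitely many values, and each value carries a finitely supported conditional law of $U$.

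For (i), condition on $V=v$. A decoder outputting $a$ incurs conditional risk $\phi_v(a)=\mathbb E\{\rho(U,a)\mid V=v\}$. This is a finite convex combination of the continuous maps $a\mapsto\rho(u,a)$, so it is continuous on the compact space $\mathcal U$ and attains its minimum at some $a^\star(v)$. Setting $g(v)=a^\star(v)$ gives a deterministic decoder. Since $V$ is finite-valued, no measurable-selection issue arises. For an arbitrary, possibly randomized, $g$, the tower property together with $\phi_v(g(v))\ge\min_a\phi_v(a)$ gives the reverse inequality. Together these yield the representation and attainment.

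For (ii), the pointwise minimum is nonnegative. Full support of $\nu$ and $Q$ means every $v$ in the support has positive mass, so $\mathfrak B^\rho=0$ if and only if $\min_a\phi_v(a)=0$ for every such $v$. That holds exactly when $\rho(U,a^\star(v))=0$ almost surely, that is, when $U$ is a.s. constant given $V=v$, since $\rho$ is a metric. By full support, this says that all models in the fiber $\{M:\tau_{\mathcal I,\Pi}(M)=t\}$ share $G_M(z)$ for every $z$. This is precisely the refinement condition, and the converse uses the decoder $g(t,z)=g_C(z)$ from Theorem~\ref{thm:retention-factorization}(ii).

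For $\mathfrak R^\rho$, the sup over finitely many $(M,z)$ vanishes if and only if $g(\tau(M),z)=G_M(z)$ for all $M$ and $z$. A single $g$ can do this exactly when each fiber has one answer map. Moreover, under refinement the infimum in $\mathfrak R^\rho$ is attained by that same $g$.

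The main subtlety is ensuring that $\mathfrak R^\rho=0$ as an infimum forces exact agreement rather than only arbitrarily small errors. I will handle it by contradiction. If a fiber contains $M_0,M_1$ with $G_{M_0}(z)\ne G_{M_1}(z)$, the triangle inequality gives, for any $g$,
\[
  \max_i\rho\bigl(G_{M_i}(z),g(t,z)\bigr)\ge\tfrac12\rho\bigl(G_{M_0}(z),G_{M_1}(z)\bigr)>0,
\]
a positive bound uniform in $g$.

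For (iii), the transcript condition makes $S$ a garbling of $\mathsf T_{\mathcal I}$ through a kernel $K(\cdot\mid t)$ that does not depend on $W$. Then $d(S,Z)$ is a randomized decoder of $V$. Conditioning on $V$, its risk is an average of $\phi_v(a)$ over the randomization, hence at least $\min_a\phi_v(a)$. Taking expectations and applying (i) gives the bound.

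Monotonicity under $\mathsf T_1=h(\mathsf T_2)$ follows because every decoder $g(\mathsf T_1,z)$ equals $g(h(\mathsf T_2),z)$, a decoder of $\mathsf T_2$. The inclusion of decoder classes therefore shrinks both infima.

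For (iv), interface equivalence means the randomized output $\widehat G$ has the same law under $M_0$ and $M_1$. Pointwise in $z$ and in the decoder randomness, the triangle inequality gives
\[
  \rho\bigl(\widehat G(z),G_0(z)\bigr)+\rho\bigl(\widehat G(z),G_1(z)\bigr)\ge\rho\bigl(G_0(z),G_1(z)\bigr).
\]
Taking expectations and bounding the maximum by half the sum gives the display. For the last claim, restrict the right-hand integral to the separated set of mass $q$, where the integrand is at least $2\varepsilon$. This yields $\frac12\cdot 2\varepsilon q=q\varepsilon$, and nonnegativity elsewhere completes the bound.
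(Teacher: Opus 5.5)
Your proposal is correct and follows essentially the same route as the paper. The shared steps are pointwise conditional minimization over the compact answer space for (i), constancy of $U$ on every positive-mass fiber for (ii), composition with the garbling kernel and inclusion of decoder classes for (iii), and the triangle inequality under a common output law for (iv); your uniform triangle-inequality bound showing that $\mathfrak R^\rho=0$ forces exact fiberwise agreement, not merely arbitrarily small error, usefully sharpens the paper's remark that the condition is ``plainly'' necessary for zero worst-case risk.
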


\begin{proof}
Condition on $V=v$. A deterministic answer $a$ has conditional loss
$\mathbb E\{\rho(U,a)\mid V=v\}$. Compactness and continuity give a minimizer;
pointwise minimization and averaging prove (i). Randomization only averages
these conditional losses and cannot improve their minimum.

The risk in (i) is zero exactly when $U$ is constant conditional on every
$(\mathsf T_{\mathcal I},Z)$ value with positive probability. Full support and
the metric property turn this into equality of the complete probe maps inside
each interface fiber. The same condition is plainly necessary and sufficient
for zero worst-case risk, proving (ii).

For (iii), composing a decoder of $S$ with the random kernel from
$\mathsf T_{\mathcal I}$ to $S$ gives a randomized decoder of the population
interface. Part (i) lower-bounds its loss. If $\mathsf T_1=h(\mathsf T_2)$,
every decoder based on $\mathsf T_1$ is available from $\mathsf T_2$; taking
the two infima proves both monotonicity statements.

For (iv), interface equivalence makes the output law of $\widehat G(z)$ the
same under $M_0$ and $M_1$. The triangle inequality gives, pointwise in $z$
and in decoder randomness,
\[
  \rho\{G_0(z),G_1(z)\}
  \le
  \rho\{G_0(z),\widehat G(z)\}
  +\rho\{\widehat G(z),G_1(z)\}.
\]
Expectation over $Z$ and decoder randomness shows that the sum of the two
risks is at least
$\mathbb E_Q\rho\{G_0(Z),G_1(Z)\}$. Their maximum is at least half this
quantity, which proves Eq.~\eqref{eq:metric-two-fiber-bound}. The final
statement follows by integrating the assumed separation over its probe set.
\end{proof}

\section{Experiments}
\label{sec:experiments}

The evaluation covers finite SCM families, readout and hidden-context variants,
few-shot renamed adaptation, metric-valued control responses, language-model
hidden states, and language-model proposal policies. Every learned state is frozen before the
mechanism probes are scored. Exact protocols, seeds, and auxiliary tables are in
Appendices~\ref{app:finite-scm-results}--\ref{app:llm-evaluation}.

Proposition~\ref{prop:weak-interfaces} supplies exact indistinguishable model
pairs. The empirical runs measure the corresponding fixed-probe failures and
the finite-sample behavior of the constructive rules. Renamed transfer uses
shifted-set localization and entrywise retesting. Continuous responses are
integrated over held-out state, action-direction, and horizon distributions;
language-model results use a separately fixed probe map.

Held-out state-action probes query target, value, delay, context, and stable
preservation. Reward and top-action accuracy are not terms in the score. For
probe set $\mathcal P$,
\begin{equation}
  \mathrm{MechAcc}
  =
  \frac1{|\mathcal P|}
  \sum_{(\xi,a)\in\mathcal P}
  \mathbf 1\{T_{\widehat\mu}(\xi,a)\equiv T_{\mu^\star}(\xi,a)\}.
\label{eq:mechanism-accuracy}
\end{equation}
Thus $\widehat R_{\mathrm{mech}}=1-\mathrm{MechAcc}$ estimates $R_Q$. The
coordinate estimates are
\begin{equation}
  \widehat R_k
  =
  \frac1{|\mathcal P|}
  \sum_{(\xi,a)\in\mathcal P}
  \mathbf 1\{\pi_kT_{\widehat\mu}(\xi,a)\neq\pi_kT_{\mu^\star}(\xi,a)\},
\label{eq:empirical-coordinate-risk}
\end{equation}
with false-write risks obtained by replacing the indicator by the
corresponding empty-set event. Few-shot renamed adaptation reports
\begin{equation}
  \mathrm{AdaptScore}
  =
  \mathbf 1\{L_{\mathrm{stab}}=0,\ L_{\mathrm{shift}}=0\},
\label{eq:adaptscore}
\end{equation}
so a method must both preserve stable entries and correct shifted ones.
The coordinate risks are empirical counterparts of $\mathfrak B_j$; they
identify which answer coordinate remains unresolved in the frozen state.

The finite discovery and language-model tables additionally report the direct
target projection
\begin{equation}
  E(\mu)=\{(a,y):\exists(c,v,\delta),
  (a,c,y,v,\delta)\in\mu\},
  \label{eq:target-edge-projection}
\end{equation}
using set precision, recall, and F1 against $E(\mu^\star)$. This projected F1
is accompanied by context recall and false-write counts; it is not presented
as the exact tuple accuracy in Eq.~\eqref{eq:mechanism-accuracy}.
The reported false-edge count is
$|E(\widehat\mu)\setminus E(\mu^\star)|$; readout false edges are the subset
whose target coordinate is a designated deterministic proxy.

\begin{figure}[t]
\centering
\includegraphics[width=\linewidth]{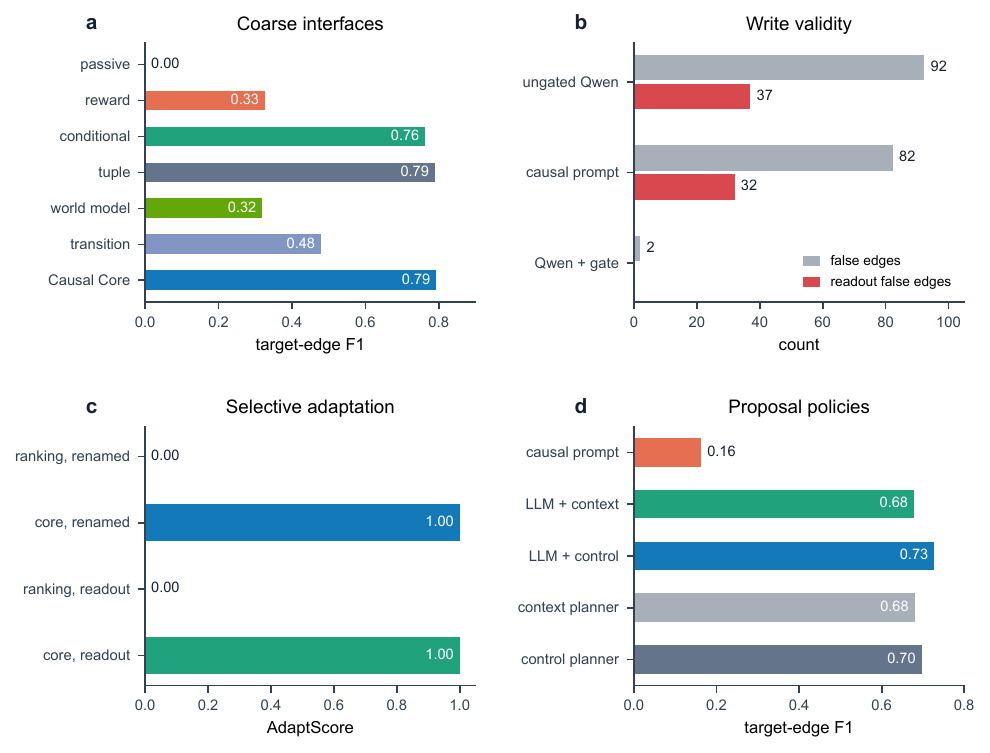}
\caption{Fixed-probe target and adaptation outcomes.}
\label{fig:main-results-v10}
\end{figure}

\subsection{Protocols and Baselines}
The suite contains finite worlds, readout and hidden-context variants,
continuous paired interventions, and language-model proposal policies.
Baselines match the available interface: passive and reward learners, ranking
loss, conditional discovery, latent world models, tuple and neural probes, the
official TD-MPC2 world model, ungated proposals, and transfer ablations. All are
evaluated after freezing, using
Eqs.~\eqref{eq:mechanism-accuracy}--\eqref{eq:target-edge-projection} or the
metric loss in Eq.~\eqref{eq:metric-retention-risk}.

\subsection{Coarse Interfaces}
The complex noisy-hidden family realizes positive probe Bayes risk under
Theorem~\ref{thm:retention-factorization}. Passive
correlation gives target-edge F1 $0.000$, while a reward-optimized learner has
target-edge F1 $0.326$. A ranking-loss predictor receives supervised
intervention-order labels; nevertheless, in the noisy-hidden ranking task it
has target-edge F1 $0.000$ even though Causal Core reaches $0.764$. Stronger
trajectory-matched baselines control for output format. On the expanded
matched run, a thresholded
conditional-discovery baseline has target-edge F1 $0.64$, and its readout-oracle variant
reaches $0.76$ but still writes hidden labels with false-positive count $1.50$.
The readout-oracle tuple writer has target-edge F1 $0.79$ but $6.19$ hidden-label false
positives. A latent world model reaches next-bit accuracy $0.905$ yet has
target-edge F1 $0.24$; with a readout oracle it reaches $0.32$ and writes
$39.88$ hidden-label false positives. Causal Core has target-edge F1 $0.79$
with zero readout and hidden-label false positives. High task or predictive
performance therefore coexists with poor fixed-probe recovery.

Under the uniform prior, each exact two-world separation in
Proposition~\ref{prop:weak-interfaces} has probe Bayes risk $1/2$ and is
decoder-independent. The neural rows support a narrower empirical statement:
high transition accuracy does not make the mechanism answer recoverable by the
specified frozen-state decoders, including readout-oracle variants. The
hidden-state experiment below adds linear and nonlinear decoders of a public
language model rather than inferring omission from its generated answer alone.

\subsection{Write Validity and Local Adaptation}
Indiscriminate writing raises recall at the cost of validity. The ungated Qwen
explorer writes $92.3$ false entries
and $37.0$ readout false positives; the context-search and control-planner
gates reduce these counts to $6.0/0.0$ and $2.0/0.0$, respectively. In renamed
and semantic-readout adaptation, non-diagnostic and
readout-unsafe baselines have AdaptScore $0$, while Causal Core has
AdaptScore $1.00$. The evaluation uses the diagnostic selection and local
update in Eqs.~\eqref{eq:diagnostic-selection}--\eqref{eq:local-edit}, whose
finite-sample guarantees are Propositions~\ref{prop:diagnostic-selection-main}
and \ref{prop:local-retest-main}.

\subsection{Hidden Contexts and Continuous Responses}
Hidden gates test the setup part of the interface. Active setup improves
hidden recall from $0.125$ after 80 interactions to $0.625$ after 160
interactions at zero hidden false positives in the four-family horizon sweep.
Rare eligibility events leave some hidden contexts outside the observed
quotient. In a continuous metric SCM, global
linear regression reaches F1 $0.667$ because it writes a deterministic readout
and extends a context-specific effect to the wrong cell; metric Causal Core
has F1 $1.000$ with no readout false positives.

We next evaluate the official $5.2$M-parameter TD-MPC2 checkpoint
\citep{hansen2024tdmpc2} on cheetah-run, walker-walk, and reacher-easy. For
saved state $x$, unit action direction $u$, and horizon
$h\in\{1,2,4\}$, the evaluator records the continuous response
\begin{equation}
 \Delta_h(x,u)
 =\frac{O_h(x,+\epsilon u)-O_h(x,-\epsilon u)}{2\epsilon},
 \qquad \epsilon=0.25 .
 \label{eq:distributional-response}
\end{equation}
Linear and two-layer nonlinear decoders receive the frozen source latent state,
latent intervention difference, $u$, and $h$. Thus a failure is not tied to a
linear readout. The nonlinear decoder has source NRMSE $0.629$ and effect-sign
accuracy $0.919$ over nine task-seed runs.

At transfer, one actuator changes sign and its index varies across runs. The
frozen model's shifted-actuator sign accuracy falls to $0.057$. A global
probe-head update using the target pairs raises it to $0.694$ but increases
stable-response NRMSE from $0.673$ to $1.460$. The selective update compares
the two actuator hypotheses on five target states per actuator, localizes the
shift in all nine runs, and reaches sign accuracy $0.948$. Its random-direction
NRMSE is $0.630$, equal to the oracle actuator map within reported precision,
while its stable-response NRMSE remains $0.673$. Appendix~\ref{app:world-model-probes}
gives per-task results, decoder training, and diagnostic-margin sensitivity.

\begin{figure}[t]
\centering
\includegraphics[width=\linewidth]{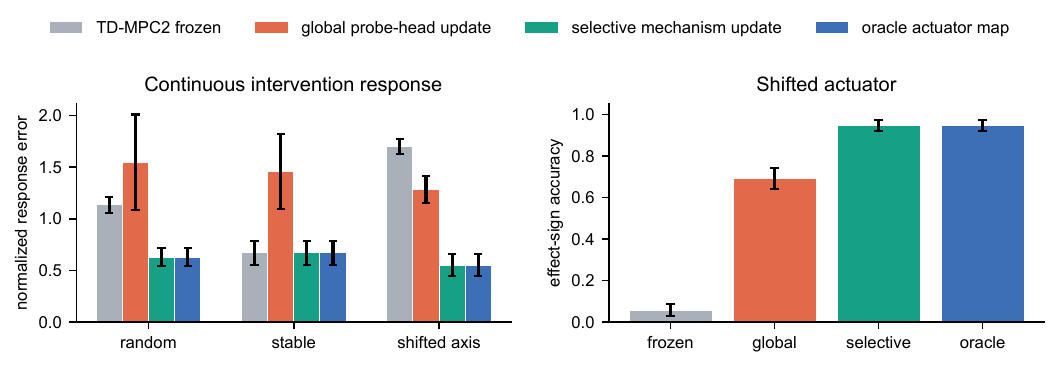}
\caption{Distributional TD-MPC2 responses under one actuator remapping.}
\label{fig:tdmpc2-retention}
\end{figure}

\subsection{Language-Model States and Proposal Policies}
A frozen-state experiment tests whether mechanism answers are retained inside
Qwen2.5-7B-Instruct \citep{yang2024qwen2}, rather than judging generated
explanations alone. Logistic probes are fit at five layers on 12 procedural
families and then frozen for six disjoint families. A two-layer decoder and an
RBF decoder test nonlinear accessibility at the final layer. Action and state
names are independently replaced by neutral tokens in every family and
condition.

The final-layer linear decoder reaches balanced accuracy $0.958$ on source
mechanisms and $0.847$ after renaming, but $0.583$ on the changed-delay subset.
The nonlinear RBF decoder obtains $0.944$, $0.875$, and $0.250$, respectively.
Thus the source result cannot be dismissed as complete absence of mechanism
information, while that information does not support the held-out changed
delay. A paired-evidence rule recovers every changed delay but accepts $0.944$
of synchronized readouts. Applying the admissible-write gate preserves
changed-delay accuracy $1.000$ and lowers readout FPR to $0.056$.
Table~\ref{tab:llm-hidden-results} gives all controls.

\begin{figure}[t]
\centering
\includegraphics[width=\linewidth]{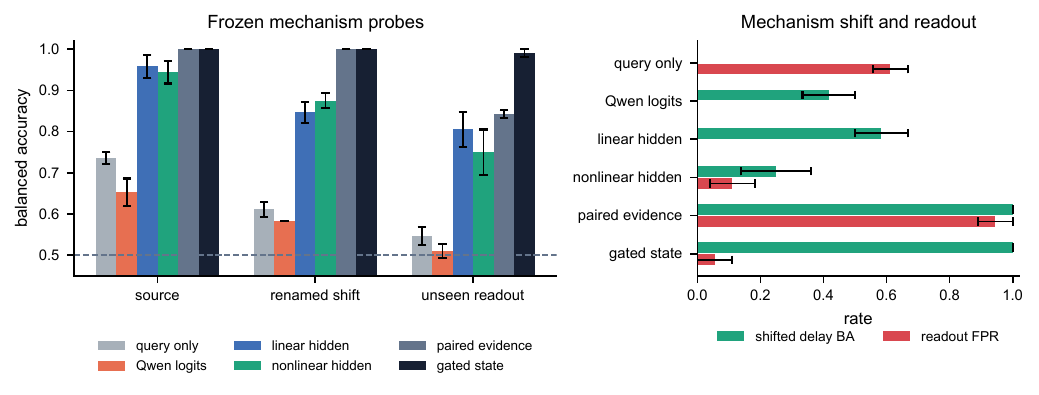}
\caption{Qwen hidden-state retention on held-out mechanism families.}
\label{fig:llm-hidden-retention}
\end{figure}

In the proposal-policy experiment, the same Qwen model chooses legal
interventions and the evidence gate processes the resulting transitions.
Causal prompting alone has target-edge F1 $0.163$ and $32.0$ readout false
positives. Gating raises target-edge F1 to $0.678$--$0.727$ and eliminates
readout false positives. Table~\ref{tab:llm-proposal-results} gives the full
breakdown.

Figure~\ref{fig:main-results-v10} localizes four distinct errors in the frozen
answer map: ranking loses target identity, ungated writing loses validity,
source dynamics loses the shifted sign, and prompting loses readout control.
These errors persist under relabeling, hidden gates, or readout corruption even
when the corresponding task objective remains useful.

\subsection{Reproducibility}
Code for generating the SCM families, running the baselines, reproducing the
MuJoCo and language-model experiments, and rebuilding every figure is
available at \url{https://github.com/ShengjunZhang/mechanism-memory}. The
source-data tables and raw result files are included with the submission.
Appendix~\ref{app:experimental-protocols} gives the random seeds, budgets,
model variants, probe definitions, and implementation details used in the
reported results.

\FloatBarrier
\section{Conclusion}
\label{sec:conclusion}

Causal retention asks whether a frozen learned state preserves an
interventional answer map beyond what task success requires. Probe Bayes risk,
posterior retest coverage, and the exact edit decomposition connect interface
identification to selective adaptation without assuming a particular decoder.
Finite SCMs, continuous control, a pretrained world model, and frozen Qwen
states expose target, context, value, and delay errors despite useful task or
source-decoding performance. The continuous results concern a fixed
distribution of state, intervention, and horizon queries rather than recovery
of an unrestricted latent SCM.

\appendix

\section{Auxiliary Theoretical Results}
\label{app:auxiliary-statements}

The finite constructions below establish the interface separations and the
extensions used by the selective-adaptation analysis.
All logarithms are natural.

\subsection{Coarse Behavioral and Predictive Interfaces}

\begin{proposition}
\label{prop:weak-interfaces}
There are finite deterministic structural causal model classes in which
reward, optimal-policy, intervention-ranking, graph, synchronized-readout,
and passive hidden-context interfaces are strictly coarser than the exact
mechanism-probe interface. For each interface, a point-mass probe distribution
has minimax mechanism risk at least $1/2$.
\end{proposition}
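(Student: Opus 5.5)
The proof is constructive. For each of the six interfaces (reward, optimal policy, intervention ranking, graph, synchronized readout, passive hidden context) I will exhibit a pair of finite deterministic SCMs $M_0,M_1$ with $M_0\sim_{\mathcal I,\Pi}M_1$ but $M_0\not\sim_{\mathcal Z}M_1$. Since the mechanism-probe interface reveals $G_M$ itself, it trivially refines $\sim_{\mathcal Z}$, so each pair witnesses that the interface quotient is \emph{strictly} coarser. I will then place a point mass $Q=\delta_{z^\star}$ on a probe $z^\star$ where $G_{M_0}(z^\star)\ne G_{M_1}(z^\star)$, so $\rho=1$. Theorem~\ref{thm:retention-factorization}(iii) then gives worst-case mechanism risk at least $1/2$ for every population decoder, randomized or not. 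Each case therefore reduces to checking two facts: equality of interface laws, and inequality of one probe answer.

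The constructions use small binary state spaces with $D=1$.

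\textbf{Reward and optimal policy.} Let two actions $a,b$ act on distinct targets $y_1,y_2$, with reward $r=X^{y_1}\vee X^{y_2}$ (or reward depending only on whether an action was taken). Swapping which action hits which target leaves the reward process and the optimal action set unchanged. It does, however, change $G(\xi,a)$ in the target coordinate.

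\textbf{Ranking.} Change only the effect value $v$, or only the delay $\delta$, while preserving the utility order $U_M(a)>U_M(b)$. Then $\mathcal D_{\mathrm{rank}}$ is identical in both models.

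\textbf{Graph.} Keep the edge $a\to y$ and alter $v$ or $\delta$. The edge set is equal, but the value or delay coordinate differs.

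\textbf{Synchronized readout.} Use the pair from the Readout Filtering subsection: $a\to Y,\ R:=Y$ versus $a\to R,\ Y:=R$. Under any protocol that intervenes only on actions, the joint trajectory laws coincide, yet the direct target differs.

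\textbf{Passive hidden context.} Let $H$ be latent with $H\equiv0$ under the passive protocol $\Pi$. One model has an effect active under $c=\{H=1\}$ and the other has no effect. The observation laws are equal, but at a probe $\xi$ with $H=1$ one answer is non-null and the other is $\bot$. This requires $\sigma$ and the probe class to range over evaluator states that the passive protocol never reaches; that is allowed, since $Q$ is fixed independently of $\Pi$.

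The main obstacle is stating each interface law precisely enough that equality is a full identity of $P_M^{\mathcal I,\Pi}$ rather than agreement of a summary. For the reward and ranking cases, this means defining the interface so it outputs only the reward trajectory or the pair set, and fixing a common protocol. The readout case needs care that no allowed action separates $Y$ from $R$, which holds because the only actions in the class target one of them and the other is a deterministic copy. Once equality of laws is verified, the $1/2$ bound follows immediately from Theorem~\ref{thm:retention-factorization}(iii); alternatively, apply the corollary with TV distance $0$.
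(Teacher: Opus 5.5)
Your overall route is the paper's. For each interface you build one pair of models with equal interface laws and a single disagreeing probe, then apply Theorem~\ref{thm:retention-factorization}(iii) with a point-mass $Q$ and $\rho=1$. Your graph and synchronized-readout pairs are exactly the paper's. Three of your witness pairs differ, and two of them need more care than you give them.

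\textbf{Reward and optimal policy.} The paper uses a single action that sets $Y$ directly in $M_0$ but sets $U$ with $Y:=U$ in $M_1$. Its ambiguity is therefore direct target versus descendant, which is the distinction the probe semantics are built around. Your symmetric swap of $a$ and $b$ under an OR reward is equally valid; it simply shows that reward symmetry alone already merges probe-distinct models.

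\textbf{Ranking.} The paper just adds a null action to the reward pair, so identical ranking labels come for free. Your value/delay variant needs an explicit utility, and under your own choice of binary states and $D=1$ it does not come for free.
\begin{itemize}
\item Delay: since $X_{t+1}=F(X_t,H_t,A_t,\varepsilon_{t+1})$, an action at $t$ first acts at $t+1$. With $D=1$ there is no second realizable delay, so you need $D\ge2$.
\item Value: compare $Y\leftarrow1$ with $Y\leftarrow0$ against a null $b$, using a state-based utility. In one of the two models $a$ produces the same state as $b$, which creates a tie and changes $\mathcal D_{\mathrm{rank}}$. You must make $b$ strictly worse or give $Y$ a third level, as the paper's baseline $1/2$ does in the graph case.
\end{itemize}

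\textbf{Passive hidden context.} The paper compares a stochastic effect of probability $p$ with a deterministic effect gated by latent $H\sim\mathrm{Bernoulli}(p)$. There $H=1$ occurs with positive probability, and the merged pair shows the real confound: latent gating that looks like noise. Your never-active context is simpler and fully deterministic, which actually matches the word ``deterministic'' in the statement better than the paper's stochastic $M_0$.

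The cost is that your probe sits at a state $\xi$ with $\mathbb P_M^{\Pi}(\Xi_t=\xi)=0$. But $p_{M_0}(r\mid z)$ is defined as a conditional probability given $\Xi_t=\xi$ under $\Pi$, so it is well defined only if you read it from the structural assignment $F(\xi,a)$, as the paper does for $\sigma_M$ in finite worlds. Your stated justification, that $Q$ is fixed independently of $\Pi$, only says where probes may be placed. It does not make the answer at an unreached state well defined.

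Only the $M_0$ side needs this structural reading, because $G_{M_1}(z^\star)=\emptyset$ follows directly from $\sigma_{M_1}=\bot$. With that sentence added, and with a concrete ranking utility, your proof goes through.
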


\begin{proof}
Each construction consists of two models with the same coarse interface and
different answers to one mechanism probe. The lower bound then follows from
Theorem~\ref{thm:retention-factorization} with $\rho=1$.

For reward and optimal-policy interfaces, let $U,Y\in\{0,1\}$ and let $a$ be
the only nontrivial action. In $M_0$, $a$ directly sets $Y\leftarrow1$ and
leaves $U$ unchanged. In $M_1$, $a$ directly sets $U\leftarrow1$ and the
structural equation is $Y\leftarrow U$. Initialize $U=Y=0$ and let reward be
$Y$. Every policy induces the same reward sequence in the two models, and the
optimal value and optimal-policy set agree. The direct target of $a$ is $Y$ in
$M_0$ and $U$ in $M_1$.

For intervention ranking, add a null action $b$ with utility zero. In both
models above, $a$ yields utility one and $b$ yields utility zero. Hence every
pairwise ranking label is identical, while the direct target of the preferred
action still differs.

For a graph interface, use the same graph $A\to Y$ in both models and let
$do(A=1)$ set $Y\leftarrow1$ in $M_0$ and $Y\leftarrow0$ in $M_1$ from a
common baseline $Y=1/2$. The graph is identical, whereas the value coordinate
of the mechanism answer differs. A delay or context coordinate can be varied
in the same way without changing the graph.

For a synchronized-readout interface, let $R$ be an observed proxy. In $M_0$,
$a$ directly sets $Y\leftarrow1$ and $R\leftarrow Y$; in $M_1$, $a$ directly
sets $R\leftarrow1$ and $Y\leftarrow R$. Both produce the observed pair
$(Y,R)=(1,1)$ after $a$, but the direct target is $Y$ in $M_0$ and $R$ in
$M_1$.

For passive hidden-context observation, fix $p\in(0,1)$. In $M_0$, action $a$
sets $Y\leftarrow1$ with probability $p$ independently on each trial. In
$M_1$, a latent $H\sim\mathrm{Bernoulli}(p)$ is drawn before each trial and
$a$ deterministically sets $Y\leftarrow1$ if and only if $H=1$. The passive
law of $(a,Y)$ is Bernoulli with parameter $p$ in both models, while only
$M_1$ has a hidden eligibility context. These pairs establish all claimed
strict coarsenings.
\end{proof}

The proposition does not assert that reward, graph estimation, or ranking is
uninformative. It states that these population objects need not determine all
coordinates of the mechanism answer. Additional intervention contrasts can
refine the interface and remove a particular ambiguity.

\subsection{Finite-Sample Contrast Gates}
\label{app:contrast-gates}

The constructive results use paired intervention contrasts rather than raw
action correlations. The following calculation supplies the common
finite-sample term for the target, context, and temporal gates.

\begin{lemma}
\label{lem:paired-contrast}
For $u\in\{0,1\}$, let
$Y_1^u,\ldots,Y_{n_u}^u$ be independent random variables in $[0,1]$ with
means $p_u$. Define
\[
  \widehat\Delta
  =\frac1{n_1}\sum_{r=1}^{n_1}Y_r^1
   -\frac1{n_0}\sum_{r=1}^{n_0}Y_r^0,
  \qquad
  \Delta=p_1-p_0.
\]
Then, for every $x>0$,
\[
  \mathbb P\{|\widehat\Delta-\Delta|\ge x\}
  \le
  2\exp\!\left\{
  -\frac{2x^2}{n_1^{-1}+n_0^{-1}}
  \right\}.
\]
If a gate accepts when $\widehat\Delta\ge\lambda$ and every population
contrast is at least $\lambda+\gamma$ or at most $\lambda-\gamma$, the
probability of any error among $K$ candidates is at most
\[
  2K\exp\!\left\{
  -\frac{2\gamma^2}{n_1^{-1}+n_0^{-1}}
  \right\}.
\]
For a balanced design $n_0=n_1=m$, this becomes $2K e^{-m\gamma^2}$.
\end{lemma}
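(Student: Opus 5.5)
The plan is to treat $\widehat\Delta-\Delta$ as a single centered sum of independent bounded terms and apply Hoeffding's inequality \citep{hoeffding1963probability} directly. Write
\[
  \widehat\Delta-\Delta
  =\sum_{r=1}^{n_1}\frac{Y_r^1-p_1}{n_1}
   +\sum_{r=1}^{n_0}\frac{p_0-Y_r^0}{n_0}.
\]
This is a sum of $n_1+n_0$ independent, mean-zero variables. The $r$-th term of the first block lies in an interval of length $1/n_1$, and each term of the second block lies in an interval of length $1/n_0$. The sum of squared ranges is therefore
\[
  n_1\cdot n_1^{-2}+n_0\cdot n_0^{-2}=n_1^{-1}+n_0^{-1}.
\]
The two-sided Hoeffding bound then gives
\[
  \mathbb P\{|\widehat\Delta-\Delta|\ge x\}\le 2\exp\{-2x^2/(n_1^{-1}+n_0^{-1})\}
\]
immediately. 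The only point needing care is independence across the two arms. I read the hypothesis as mutual independence of the entire collection $\{Y_r^u\}$, which is what the paired-intervention design of Condition~\ref{cond:interventions} supplies. If the two arms were merely independent within each arm, I would fall back on bounding each arm separately and paying a constant.

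For the gate statement, I fix a candidate and split on its population contrast.

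\emph{Case $\Delta\ge\lambda+\gamma$.} The only possible error is rejection. Rejection means $\widehat\Delta<\lambda$, which forces $\widehat\Delta-\Delta<-\gamma$.

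\emph{Case $\Delta\le\lambda-\gamma$.} The only possible error is acceptance. Acceptance means $\widehat\Delta\ge\lambda$, which forces $\widehat\Delta-\Delta\ge\gamma$.

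In either case the error event lies inside $\{|\widehat\Delta-\Delta|\ge\gamma\}$. Its probability is at most $2\exp\{-2\gamma^2/(n_1^{-1}+n_0^{-1})\}$, using only a one-sided tail; the factor $2$ is slack, kept for uniformity. A union bound over the $K$ candidates yields the factor $K$. This step needs no independence across candidates, only that each candidate's contrast satisfies the first display. In the balanced case $n_0=n_1=m$, we have $n_1^{-1}+n_0^{-1}=2/m$, and the exponent becomes $-m\gamma^2$, giving $2Ke^{-m\gamma^2}$.

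I expect no real obstacle, since this is a routine concentration calculation. The step most worth stating carefully is the sample sizes. In the gate's application, $n_1,n_0$ are counts of matured trials and could be random. The lemma treats them as fixed; for random counts I would condition on them, assuming that under the design the counts are independent of the outcomes or fixed by the protocol.
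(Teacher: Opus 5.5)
Your proof is correct and follows the paper's argument: you write $\widehat\Delta-\Delta$ as a centered sum of $n_1+n_0$ independent terms with ranges $1/n_1$ and $1/n_0$, apply two-sided Hoeffding, observe that any gate error forces $|\widehat\Delta-\Delta|\ge\gamma$, and take a union bound over the $K$ candidates. Your remark that a one-sided tail already suffices for each candidate, so the factor $2$ is slack, is a valid minor sharpening.
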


\begin{proof}
Center all observations and write
\[
  \widehat\Delta-\Delta
  =\sum_{r=1}^{n_1}\frac{Y_r^1-p_1}{n_1}
   -\sum_{r=1}^{n_0}\frac{Y_r^0-p_0}{n_0}.
\]
Each summand in the first sum has an interval of possible values of length
$1/n_1$, and each summand in the second has interval length $1/n_0$. Hence
the sum of squared interval lengths is
\[
  n_1\left(\frac1{n_1}\right)^2
  +n_0\left(\frac1{n_0}\right)^2
  =\frac1{n_1}+\frac1{n_0}.
\]
Hoeffding's inequality \citep{hoeffding1963probability} gives the stated
exponential bound for each tail; adding the two tails proves the first
display. A population contrast on either side of the threshold can be
misclassified only if
$|\widehat\Delta-\Delta|\ge\gamma$. Applying the first bound and then a union
bound over $K$ candidates proves the remaining claims.
\end{proof}

The independent-pair calculation extends to adaptive interaction when action
probabilities are logged.

\begin{lemma}
\label{lem:adaptive-contrast}
Let $(\mathcal F_t)_{t=0}^N$ be the interaction filtration. At round $t$, let
$Y_t(1),Y_t(0)\in[0,1]$ be potential responses and suppose
\[
  A_t\perp\!\!\!\perp (Y_t(1),Y_t(0))\mid\mathcal F_{t-1},
  \qquad
  e_t=\mathbb P(A_t=1\mid\mathcal F_{t-1})\in[\epsilon,1-\epsilon]
\]
almost surely for some $0<\epsilon\le1/2$. Observe
$Y_t=A_tY_t(1)+(1-A_t)Y_t(0)$ and define
\[
  \widehat\Delta_N
  =\frac1N\sum_{t=1}^N
  \left\{\frac{A_tY_t}{e_t}
  -\frac{(1-A_t)Y_t}{1-e_t}\right\},
\]
\[
  \overline\Delta_N
  =\frac1N\sum_{t=1}^N
  \mathbb E[Y_t(1)-Y_t(0)\mid\mathcal F_{t-1}].
\]
Then, for every $x>0$,
\[
  \mathbb P\{ |\widehat\Delta_N-\overline\Delta_N|\ge x\}
  \le2\exp\{-N\epsilon^2x^2/2\}.
\]
Consequently, $K$ adaptive gates whose average contrasts are separated from
their thresholds by $\gamma$ have joint error probability at most
$2K\exp\{-N\epsilon^2\gamma^2/2\}$.
\end{lemma}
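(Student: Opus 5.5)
The plan is to write $\widehat\Delta_N-\overline\Delta_N$ as an average of bounded martingale differences and apply the Azuma--Hoeffding inequality. Define
\[
  \xi_t=\frac{A_tY_t}{e_t}-\frac{(1-A_t)Y_t}{1-e_t}
  -\mathbb E[Y_t(1)-Y_t(0)\mid\mathcal F_{t-1}],
\]
so that $\widehat\Delta_N-\overline\Delta_N=N^{-1}\sum_{t=1}^N\xi_t$. The variable $\xi_t$ is $\mathcal F_t$-measurable, since $e_t$ and the conditional mean are $\mathcal F_{t-1}$-measurable and $(A_t,Y_t)$ is revealed at round $t$.

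The first step is to show $\mathbb E[\xi_t\mid\mathcal F_{t-1}]=0$. Since $A_tY_t=A_tY_t(1)$, the conditional independence assumption gives
\[
  \mathbb E[A_tY_t(1)\mid\mathcal F_{t-1}]
  =e_t\,\mathbb E[Y_t(1)\mid\mathcal F_{t-1}],
\]
and dividing by $e_t$ leaves $\mathbb E[Y_t(1)\mid\mathcal F_{t-1}]$. The treated-arm term is therefore conditionally unbiased. The control arm is handled the same way with $1-e_t$, which makes $\xi_t$ a martingale difference sequence.

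The second step bounds the range of $\xi_t$ conditional on $\mathcal F_{t-1}$. When $A_t=1$, the importance-weighted term lies in $[0,1/e_t]$. When $A_t=0$, it lies in $[-1/(1-e_t),0]$. The term therefore takes values in an interval of length $1/e_t+1/(1-e_t)=1/\{e_t(1-e_t)\}$, and subtracting the $\mathcal F_{t-1}$-measurable mean does not change this length.

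The third step, and the main obstacle, is matching the stated constant $\epsilon^2/2$. With $c_t=1/\{e_t(1-e_t)\}$, Azuma--Hoeffding gives
\[
  \mathbb P\Bigl\{\Bigl|\sum_t\xi_t\Bigr|\ge Nx\Bigr\}
  \le2\exp\!\Bigl\{-\frac{2N^2x^2}{\sum_t c_t^2}\Bigr\}.
\]
The crude bound $c_t\le 1/\{\epsilon(1-\epsilon)\}\le 2/\epsilon$, which uses $\epsilon\le1/2$, yields the exponent $-N\epsilon^2x^2/2$. This is exactly the claimed rate, possibly loose by a factor $(1-\epsilon)^2$, which is harmless. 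I would first check whether a tighter range argument is needed and expect it is not. One point to verify is that the conditional range endpoints depend only on $\mathcal F_{t-1}$, as Azuma--Hoeffding requires; they do, since $e_t$ is predictable.

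Finally, the gate consequence follows exactly as in Lemma~\ref{lem:paired-contrast}. A gate separated from its threshold by $\gamma$ can err only if $|\widehat\Delta_N-\overline\Delta_N|\ge\gamma$ for its contrast. Applying the tail bound with $x=\gamma$ and a union bound over the $K$ gates gives the joint error probability $2K\exp\{-N\epsilon^2\gamma^2/2\}$.
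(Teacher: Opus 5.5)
Your proposal is correct and is essentially the paper's proof: the same centered inverse-propensity martingale differences, the same predictable range $[-1/(1-e_t),1/e_t]$ of width at most $2/\epsilon$, and an Azuma--Hoeffding bound, which the paper writes out as the conditional Hoeffding lemma, iterated conditioning, and a Chernoff step optimized at $\lambda=\epsilon^2x$. One thing to tidy: substitute the deterministic bound $c_t\le 2/\epsilon$ before invoking Azuma--Hoeffding, because the intermediate display has the random predictable quantity $\sum_t c_t^2$ in the exponent and so is not itself a valid probability bound.
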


\begin{proof}
Set
\[
  V_t=\frac{A_tY_t}{e_t}
      -\frac{(1-A_t)Y_t}{1-e_t},
  \qquad
  d_t=\mathbb E[Y_t(1)-Y_t(0)\mid\mathcal F_{t-1}].
\]
Consistency and sequential randomization imply
\[
\begin{aligned}
 \mathbb E[V_t\mid\mathcal F_{t-1}]
 &=\frac{\mathbb E[A_tY_t(1)\mid\mathcal F_{t-1}]}{e_t}
   -\frac{\mathbb E[(1-A_t)Y_t(0)\mid\mathcal F_{t-1}]}{1-e_t}\\
 &=\mathbb E[Y_t(1)\mid\mathcal F_{t-1}]
   -\mathbb E[Y_t(0)\mid\mathcal F_{t-1}]
 =d_t.
\end{aligned}
\]
Thus $D_t=V_t-d_t$ is a martingale difference. Conditional on
$\mathcal F_{t-1}$, $V_t$ lies between $-1/(1-e_t)$ and $1/e_t$; subtracting
$d_t$ does not change the interval width, and
\[
  \frac1{e_t}+\frac1{1-e_t}\le\frac2\epsilon.
\]
Hoeffding's conditional lemma therefore gives
\[
  \mathbb E[e^{\lambda D_t}\mid\mathcal F_{t-1}]
  \le \exp\{\lambda^2/(2\epsilon^2)\}.
\]
Iterating conditional expectation yields
\[
  \mathbb E\exp\!\left\{\lambda\sum_{t=1}^ND_t\right\}
  \le\exp\{N\lambda^2/(2\epsilon^2)\}.
\]
For $\lambda>0$, Markov's inequality gives
\[
  \mathbb P\!\left\{\sum_{t=1}^ND_t\ge Nx\right\}
  \le\exp\{-\lambda Nx+N\lambda^2/(2\epsilon^2)\}.
\]
The minimizing value is $\lambda=\epsilon^2x$, which gives
$\exp\{-N\epsilon^2x^2/2\}$. Applying the same argument to $-D_t$ and adding
the tails proves the first claim. The gate bound follows by setting
$x=\gamma$ and taking a union bound over $K$ candidates.
\end{proof}

\begin{proposition}
\label{prop:readout-filter-bound}
Let $\mathcal F$ be a finite class of candidate readout formulas and let
$A(f)=\mathbb P\{f(X)=R\}$. From $m$ independent observations define
$\widehat A_m(f)=m^{-1}\sum_{r=1}^m\mathbf 1\{f(X_r)=R_r\}$. The rule declares
$R$ a formula readout when $\max_{f\in\mathcal F}\widehat A_m(f)\ge c$.
If some $f^\star$ satisfies $A(f^\star)\ge c+\gamma$, its false-negative
probability is at most $e^{-2m\gamma^2}$. If every $f$ satisfies
$A(f)\le c-\gamma$, the false-positive probability is at most
$|\mathcal F|e^{-2m\gamma^2}$.
\end{proposition}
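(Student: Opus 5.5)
Both error directions reduce to Hoeffding's inequality for the empirical agreement frequencies $\widehat A_m(f)$. The first direction needs no union bound; the second needs one over $\mathcal F$. For each fixed formula $f$, the indicators $\mathbf 1\{f(X_r)=R_r\}$, $r=1,\dots,m$, are independent $\{0,1\}$-valued variables with mean $A(f)$. Hoeffding's inequality \citep{hoeffding1963probability} therefore gives, for every $x>0$,
\[
  \mathbb P\{\widehat A_m(f)-A(f)\le -x\}\le e^{-2mx^2},
  \qquad
  \mathbb P\{\widehat A_m(f)-A(f)\ge x\}\le e^{-2mx^2}.
\]

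\emph{False negative.} Suppose $A(f^\star)\ge c+\gamma$. The rule fails to declare $R$ a readout only if $\max_{f}\widehat A_m(f)<c$. That event is contained in $\{\widehat A_m(f^\star)<c\}$, which in turn is contained in
\[
  \{\widehat A_m(f^\star)-A(f^\star)<-\gamma\}.
\]
Applying the lower-tail bound to the single formula $f^\star$ with $x=\gamma$ gives probability at most $e^{-2m\gamma^2}$. No union bound is needed because only $f^\star$ is involved.

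\emph{False positive.} Suppose every $f$ satisfies $A(f)\le c-\gamma$. A false declaration means $\widehat A_m(f)\ge c$ for some $f\in\mathcal F$. For each such $f$ this forces $\widehat A_m(f)-A(f)\ge\gamma$, which by the upper-tail bound has probability at most $e^{-2m\gamma^2}$. A union bound over the finite class then yields $|\mathcal F|e^{-2m\gamma^2}$.

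There is no substantive obstacle. The only points to check are that independence of the $m$ observations makes the indicators independent for each fixed $f$, and that the union bound appears only in the false-positive direction. The asymmetry between the two bounds comes precisely from this: the false-negative event concerns one distinguished formula, while the false-positive event can be triggered by any of the $|\mathcal F|$ formulas.
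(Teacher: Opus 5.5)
Your proposal is correct and follows essentially the same route as the paper's proof. You both apply Hoeffding's inequality to the single formula $f^\star$ for the false-negative direction, with no union bound. You both obtain the false-positive bound from a per-formula upper-tail Hoeffding bound combined with a union bound over $\mathcal F$.
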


\begin{proof}
In the first case, a false negative implies
$\widehat A_m(f^\star)<c$. Therefore
\[
\begin{aligned}
 \mathbb P\{\text{false negative}\}
 &\le
 \mathbb P\{\widehat A_m(f^\star)-A(f^\star)\le-\gamma\}\\
 &\le e^{-2m\gamma^2}.
\end{aligned}
\]
In the second case, a false positive implies that at least one
$f\in\mathcal F$ satisfies
$\widehat A_m(f)-A(f)\ge\gamma$. Hoeffding's inequality gives probability
at most $e^{-2m\gamma^2}$ for each fixed $f$; a union bound over
$\mathcal F$ proves the result.
\end{proof}

\subsection{Hidden-Context Acquisition}

The no-signal bound is exact for a uniformly hidden label. If
$B\sim\operatorname{Unif}(\{0,1\}^m)$ and $I(B;S)=0$, then $B$ remains uniform
conditional on $S$. Consequently every deterministic or randomized decoder
$\widehat B$ based on $S$ satisfies
\[
  \E d_{\mathrm H}(\widehat B,B)=m/2,
  \qquad
  \Pp\{\widehat B=B\}=2^{-m}.
\]
Active setup changes the rate at which informative eligible outcomes arrive.

\begin{proposition}
\label{prop:setup-cost}
Fix a hidden-gated candidate. Suppose a setup policy reaches an eligible state
within at most $h$ actions with probability at least $\beta$, independently
across reset attempts. To collect $m$ eligible process outcomes with
probability at least $1-\delta$, it is sufficient to run
\[
  N\ge\frac{8}{\beta}
  \left(m+\log\frac1\delta\right)
\]
setup attempts, for total action cost at most $N(h+1)$. Passive access is the
same calculation with the ambient eligibility probability $q$ in place of
$\beta$; its expected number of attempts is $m/q$.
\end{proposition}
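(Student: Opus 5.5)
The argument views attempts as Bernoulli trials and applies a lower-tail Chernoff bound to their count of successes. Each reset attempt runs the setup policy for at most $h$ actions and, if it reaches eligibility, spends one more action to record the process outcome. So each attempt costs at most $h+1$ actions, and $N$ attempts cost at most $N(h+1)$; that deterministic part is immediate. Let $\xi_k=\mathbf 1\{\text{attempt }k\text{ reaches eligibility within }h\text{ actions}\}$ for $k=1,\ldots,N$. Since success probabilities are only bounded below by $\beta$, I would first couple each $\xi_k$ with an independent $\mathrm{Bernoulli}(\beta)$ variable $\zeta_k\le\xi_k$ by thinning. This reduces the problem to bounding $\mathbb P\{\sum_k\zeta_k<m\}$ for a $\mathrm{Binomial}(N,\beta)$ count.

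The key step, and the only real obstacle, is a clean tail bound producing the constant $8/\beta$. I would use the multiplicative Chernoff lower tail $\mathbb P\{S<(1-\epsilon)\mu\}\le\exp\{-\epsilon^2\mu/2\}$ with $\mu=N\beta$ and $\epsilon=1/2$. Under the assumed $N\ge 8(m+\log(1/\delta))/\beta$ we have $\mu\ge 8m+8\log(1/\delta)$, so $\mu/2\ge 4m\ge m$ and the event $\{S<m\}$ is contained in $\{S<\mu/2\}$. Its probability is at most $\exp\{-\mu/8\}\le\exp\{-m-\log(1/\delta)\}\le\delta$. I would check that the constant $8$ is what this choice of $\epsilon$ yields, since $\epsilon^2/2=1/8$. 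The slack ($4m$ versus $m$) shows the constant is not tight, which is acceptable for a sufficiency claim. If the paper's intended route were Hoeffding instead, the dependence would be $\beta^{-2}$, so the multiplicative form is the right choice.

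For passive access, replace $\beta$ by the ambient eligibility probability $q$, and the same calculation goes through verbatim. For the expectation claim, the number of attempts needed to collect $m$ eligible outcomes is a sum of $m$ independent geometric waiting times with mean $1/q$ each. Linearity then gives expected count $m/q$, matching the remark after the hidden-context setup definition. Observability of setup completion is used only to know when to stop and reset, and it does not enter the probability bound.
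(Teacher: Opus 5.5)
Your proposal is correct and follows essentially the same route as the paper's proof. Both arguments dominate the success count by a $\mathrm{Binomial}(N,\beta)$ variable, which your thinning coupling makes explicit. Both then apply the multiplicative Chernoff lower tail at half the mean to get $e^{-N\beta/8}\le e^{-m}\delta\le\delta$, charge at most $h+1$ actions per attempt, and obtain the passive expectation $m/q$ as a negative-binomial (sum-of-geometrics) mean.
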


\begin{proof}
Let $K_N$ be the number of successful setup attempts. It stochastically
dominates a $\mathrm{Binomial}(N,\beta)$ variable with mean
$\nu=N\beta$. The stated condition implies
\[
  \nu\ge8\left(m+\log\frac1\delta\right),
  \qquad m\le\nu/2.
\]
The multiplicative Chernoff lower-tail inequality
\citep{boucheron2013concentration} therefore gives
\[
\begin{aligned}
  \mathbb P(K_N<m)
  &\le\mathbb P(K_N\le\nu/2)
   \le e^{-\nu/8}\\
  &\le e^{-m-\log(1/\delta)}
   =e^{-m}\delta
   \le\delta.
\end{aligned}
\]
Each successful attempt contributes one process outcome and uses at most
$h+1$ actions. Under passive access, eligible arrivals are Bernoulli with
parameter $q$, so the waiting time for $m$ arrivals is negative binomial with
expectation $m/q$; the same Chernoff calculation gives the high-probability
$1/q$ factor.
\end{proof}

\subsection{Randomized Lists and Adaptive Stopping}
\label{app:proof-list-adaptation-lower}

Let $R$ be internal randomness independent of $(B,Z)$. Conditional on
$Z=z$, the coverage of $\mathcal L(z,R)$ is a convex combination of the
coverage values of deterministic lists and is therefore at most
$\Gamma_k(z)$. An adaptive procedure that stops after retesting at most $k$
distinct entries induces the list of all entries it examined. If its final
memory is exact, this list must contain every shifted entry unless an untested
entry was supplied by an external oracle. Hence its success probability is at
most $\mathbb E\Gamma_k(Z)$ as well. Adaptive ordering can reduce the number
of tests on favorable paths, but cannot increase the coverage permitted by a
hard cap of $k$ distinct entries.

\subsection{Diagnostic Localization Lower Bound}
\label{app:diagnostic-lower}

Consider the symmetric one-shift family in
Proposition~\ref{prop:diagnostic-selection-main}. Let $J$ be uniform on
$[n]$. Conditional on $J=j$, the $m$ samples at coordinate $j$ are
$\mathrm{Bernoulli}(1/2+\Delta/2)$ and those at every other coordinate are
$\mathrm{Bernoulli}(1/2-\Delta/2)$. Denote the joint law by $P_j$.

For $u=(1+\Delta)/2$ and $v=(1-\Delta)/2$,
\[
  \operatorname{kl}(u,v)
  =\Delta\log\frac{1+\Delta}{1-\Delta}.
\]
When $0<\Delta\le1/2$,
\[
  \log\frac{1+\Delta}{1-\Delta}
  =\log(1+\Delta)-\log(1-\Delta)
  \le\Delta+\frac{\Delta}{1-\Delta}
  \le3\Delta,
\]
so both directed Bernoulli divergences are at most $4\Delta^2$. For
$j\ne k$, $P_j$ and $P_k$ differ only at coordinates $j$ and $k$; product
additivity therefore gives
\[
  \mathrm{KL}(P_j\Vert P_k)
  \le8m\Delta^2.
\]
If $P_\circ=n^{-1}\sum_{k=1}^nP_k$, convexity in the second argument yields
\[
\begin{aligned}
 I(J;X)
 &=\frac1n\sum_{j=1}^n\mathrm{KL}(P_j\Vert P_\circ)\\
 &\le\frac1{n^2}\sum_{j,k=1}^n\mathrm{KL}(P_j\Vert P_k)
 \le8m\Delta^2.
\end{aligned}
\]
Fano's inequality \citep{cover2006elements} now implies
\[
  \mathbb P(\widehat J\ne J)
  \ge1-\frac{8m\Delta^2+\log2}{\log n}.
\]
Requiring this error to be at most $\varepsilon$ and rearranging proves
\[
  m\ge\frac{(1-\varepsilon)\log n-\log2}{8\Delta^2}.
\]

\subsection{\texorpdfstring{Top-$s$}{Top-s} Diagnostic Selection}
\label{app:proof-top-s-main}

Suppose $|B|=s$, shifted entries have means at least $p$, stable entries have
means at most $q<p$, and $\Delta=p-q$. Let $\widehat B_s$ contain the $s$
largest empirical means. If
\[
  \min_{j\in B}\overline X_j>\frac{p+q}{2}
  \quad\text{and}\quad
  \max_{j\notin B}\overline X_j<\frac{p+q}{2},
\]
then $\widehat B_s=B$. Hoeffding's inequality and a union bound give
\[
  \Pp\{\widehat B_s\ne B\}
  \le n\exp(-m\Delta^2/2).
\]
Consequently $m\ge2\Delta^{-2}\log(n/\delta)$ is sufficient for exact
top-$s$ recovery with probability at least $1-\delta$.

\subsection{Coordinate Loss and End-to-End Sample Composition}
\label{app:proof-end-to-end-risk}

Let $e_{j,k}$ indicate an error in coordinate
$k\in\{\mathrm{ctx},\mathrm{tar},\mathrm{val},\mathrm{del}\}$ of entry
$j$. Exact entry error satisfies
\[
  \max_k e_{j,k}
  \le \mathbf 1\{\widehat\theta_j^t\ne\theta_j^t\}
  \le \sum_k e_{j,k}.
\]
Summing first over $S$ and $B$ and then dividing by $n$ yields
\[
  L_{\mathrm{adapt}}
  =\frac{|S|}{n}L_{\mathrm{stab}}
   +\frac{|B|}{n}L_{\mathrm{shift}},
\]
and bounds each term by its coordinate losses. Let
$F_{\mathrm{tar}},F_{\mathrm{ctx}},F_{\mathrm{ro}},F_{\mathrm{time}}$, and
$F_{\mathrm{hid}}$ denote, respectively, a target-contrast error, a visible
context error, a readout-classification error, a delay error, and a hidden
setup or hidden-label error in the source memory. Then
\[
  F_{\mathrm{src}}
  \subseteq
  F_{\mathrm{tar}}\cup F_{\mathrm{ctx}}\cup F_{\mathrm{ro}}
  \cup F_{\mathrm{time}}\cup F_{\mathrm{hid}}.
\]
Diagnostic selection controls whether the appropriate target entry is tested;
local retesting controls its replacement coordinates. This is the component
factorization used by Theorem~\ref{thm:end-to-end-risk-main}.

\label{app:proof-end-to-end-sample-main}
For balanced paired designs, Lemma~\ref{lem:paired-contrast} gives
\[
\begin{aligned}
  \Pp(F_{\mathrm{tar}})&\le2K_{\mathrm{tar}}
     e^{-m_{\mathrm{tar}}\gamma_{\mathrm{tar}}^2},\\
  \Pp(F_{\mathrm{ctx}})&\le2K_{\mathrm{ctx}}
     e^{-m_{\mathrm{ctx}}\gamma_{\mathrm{ctx}}^2},\\
  \Pp(F_{\mathrm{time}})&\le2K_{\mathrm{time}}
     e^{-m_{\mathrm{time}}\gamma_{\mathrm{time}}^2}.
\end{aligned}
\]
Proposition~\ref{prop:readout-filter-bound} and the hidden-label threshold
test give the following bounds, where $F_{\max}$ is the largest readout
formula-class size:
\[
\begin{aligned}
  \Pp(F_{\mathrm{ro}})
  &\le K_{\mathrm{ro}}(F_{\max}+1)
      e^{-2m_{\mathrm{ro}}\gamma_{\mathrm{ro}}^2},\\
  \Pp(F_{\mathrm{hid}})
  &\le \delta_{\mathrm{set}}
      +2K_{\mathrm{hid}}e^{-m_{\mathrm{hid}}\gamma_{\mathrm{hid}}^2},
\end{aligned}
\]
where Proposition~\ref{prop:setup-cost} supplies an attempted-action budget
for $\delta_{\mathrm{set}}$. Diagnostic localization and local retesting obey
\[
\begin{aligned}
  \Pp(F_{\mathrm{diag}})
  &\le ne^{-m_{\mathrm{diag}}\Delta^2/2},\\
  \Pp(F_{\mathrm{upd}})
  &\le\sum_{j\in B}|\mathcal N_j|e^{-r_j\eta_j^2/2}.
\end{aligned}
\]
For example, the sufficient allocations
\[
\begin{aligned}
  m_g&\ge\gamma_g^{-2}\log\frac{2K_g}{\delta_g},
  &&g\in\{\mathrm{tar},\mathrm{ctx},\mathrm{time}\},\\
  m_{\mathrm{ro}}&\ge\frac{1}{2\gamma_{\mathrm{ro}}^2}
  \log\frac{K_{\mathrm{ro}}(F_{\max}+1)}{\delta_{\mathrm{ro}}},\\
  m_{\mathrm{hid}}&\ge\frac{1}{\gamma_{\mathrm{hid}}^2}
  \log\frac{2K_{\mathrm{hid}}}{\delta_{\mathrm{label}}},\\
  m_{\mathrm{diag}}&\ge\frac{2}{\Delta^2}
  \log\frac{n}{\delta_{\mathrm{diag}}},\\
  r_j&\ge\frac{2}{\eta_j^2}
  \log\frac{|B|\,|\mathcal N_j|}{\delta_{\mathrm{upd}}}
\end{aligned}
\]
make the corresponding terms no larger than their assigned failure budgets.
Set
\[
  \delta_{\mathrm{src}}
  =\delta_{\mathrm{tar}}+\delta_{\mathrm{ctx}}+\delta_{\mathrm{ro}}
   +\delta_{\mathrm{time}}+\delta_{\mathrm{hid}},
  \qquad
  \delta_{\mathrm{hid}}=\delta_{\mathrm{set}}+\delta_{\mathrm{label}}.
\]
If
$\delta_{\mathrm{src}}+\delta_\phi+
\delta_{\mathrm{diag}}+\delta_{\mathrm{upd}}\le\delta$, substitution into
Theorem~\ref{thm:end-to-end-risk-main} yields
$\Pp(L_{\mathrm{adapt}}>0)\le\delta$.

\section{Experimental Protocols and Results}
\label{app:experimental-protocols}

\subsection{Fixed-Memory Evaluation}
\label{app:finite-scm-results}

During interaction, each method may use its allowed internal state to choose
actions. Learning then stops. The evaluator queries only the frozen answer
map $T_{\widehat\mu}$ on held-out probes and compares it with the SCM answer
$G_M$. Ground-truth mechanisms are used to construct probes and scores after
the run; they are never supplied to the writer during interaction.

Exact mechanism accuracy uses tuple matches. The finite discovery and language
tables report target-edge precision, recall, and F1 from
Eq.~\eqref{eq:target-edge-projection}, while context, value, delay, and readout
coordinates are probed separately. The FP column counts
$|E(\widehat\mu)\setminus E(\mu^\star)|$. Readout false positives are the
subset whose direct-target field is a designated sensor or deterministic
proxy. Hidden recall counts true hidden-gated mechanisms
whose context type is retained. AdaptScore equals one only when all shifted
entries are corrected and all stable entries are preserved.

\subsection{Baselines}

The reward learner stores effects associated with high-return actions. Passive
correlation stores observed changes without action contrasts. Conditional
discovery thresholds conditional transition differences; its readout-oracle
variant removes known readouts before scoring. Tuple lift converts discovered
dependencies into the full output schema, and its oracle variant receives the
same readout information. The latent world model and neural transition probe
fit source transitions and are decoded only after their states are frozen.
TD-MPC2 uses the released MT30 checkpoint; linear and nonlinear response
decoders are fitted to source intervention pairs and frozen before the actuator
remapping. Its global and selective updates receive identical target pairs.
The ranking predictor minimizes pairwise intervention-ranking loss. Ungated
writers admit every proposed tuple; gated variants use the same proposals but
apply Eqs.~\eqref{eq:admitted-set} and \eqref{eq:memory-write}. These controls
separate objective and interface effects from output-format effects.

\begin{table}[tbp]
\centering
\small
\setlength{\tabcolsep}{3.5pt}
\begin{tabular}{L{0.34\linewidth}rrrrr}
\toprule
Method & Target F1 & FP & Readout FP & Hidden recall & Hidden FP \\
\midrule
Causal Core context search & $0.793$ & $1.063$ & $0.000$ & $0.458$ & $0.000$ \\
Conditional discovery, tuned & $0.640$ & $8.438$ & $6.813$ & $0.417$ & $3.250$ \\
Conditional discovery, readout oracle & $0.763$ & $1.938$ & $0.000$ & $0.417$ & $1.500$ \\
Tuple lift, tuned & $0.650$ & $8.000$ & $7.438$ & $0.479$ & $10.875$ \\
Tuple lift, readout oracle & $0.789$ & $1.313$ & $0.000$ & $0.500$ & $6.188$ \\
Latent world model, tuned & $0.239$ & $48.688$ & $16.250$ & $0.250$ & $46.750$ \\
Latent world model, readout oracle & $0.319$ & $43.563$ & $0.000$ & $0.250$ & $39.875$ \\
Transition probe, tuned & $0.396$ & $38.625$ & $13.250$ & $0.313$ & $28.063$ \\
Transition probe, readout oracle & $0.479$ & $25.375$ & $0.000$ & $0.313$ & $21.188$ \\
\bottomrule
\end{tabular}
\caption{Matched finite-SCM baselines.}
\label{tab:matched-baselines}
\end{table}

All rows in Table~\ref{tab:matched-baselines} receive the same trajectories.
The latent world model attains next-bit accuracy $0.905$ (and $0.907$ with the
readout oracle), so its probe error is not explained by failed source
prediction. Treating each generated SCM family as the independent unit, the
mean target-edge F1 and family-clustered 95\% $t$ intervals are $0.793$ [$0.749,0.837$]
for Causal Core, $0.640$ [$0.606,0.673$] for conditional discovery,
$0.789$ [$0.740,0.839$] for oracle tuple lift, $0.239$ [$0.185,0.292$] for
latent modeling, and $0.396$ [$0.387,0.404$] for the transition probe. The
intervals are descriptive across the eight generated families; the exact
family-seed values are in the source-data archive.

\begin{table}[tbp]
\centering
\small
\setlength{\tabcolsep}{4pt}
\begin{tabular}{L{0.31\linewidth}L{0.17\linewidth}L{0.20\linewidth}L{0.22\linewidth}}
\toprule
Task & Ranking signal & Ranking predictor & Causal Core \\
\midrule
Noisy-hidden ranking & Source rank $0.521$ & Target F1 $0.000$; hidden $0.000$ & Target F1 $0.764$; hidden $0.672$ \\
Renamed adaptation & Source/target rank $0.865/0.500$ & AdaptScore $0.000$ & AdaptScore $1.000$ \\
Semantic-readout adaptation & Source/target rank $0.756/0.428$ & AdaptScore $0.000$ & AdaptScore $1.000$ \\
\bottomrule
\end{tabular}
\caption{Ranking and selective-adaptation results.}
\label{tab:ranking-adaptation}
\end{table}

\subsection{Continuous Metric Cells}

The continuous SCM has two physical state coordinates, a two-dimensional
action, a deterministic pressure readout, and a context-specific vent effect.
The evaluator fixes metric cells before training. A pulse is correct only when
its direct target, sign, and context cell match the reference answer.

\begin{table}[tbp]
\centering
\scriptsize
\setlength{\tabcolsep}{3.5pt}
\begin{tabular}{L{0.31\linewidth}rrrrrr}
\toprule
Method & Precision & Recall & F1 & FP & Readout FP & Context error \\
\midrule
Random correlation & $0.252$ & $1.000$ & $0.402$ & $17.85$ & $5.95$ & $0.00$ \\
Global linear regression & $0.500$ & $1.000$ & $0.667$ & $6.00$ & $4.00$ & $2.00$ \\
Metric Causal Core & $1.000$ & $1.000$ & $1.000$ & $0.00$ & $0.00$ & $0.00$ \\
\bottomrule
\end{tabular}
\caption{Continuous metric-cell results.}
\label{tab:continuous-results}
\end{table}

For noise levels $0.02,0.05,0.10,0.15$, metric Causal Core retains F1
$1.000$. The corresponding random-correlation F1 scores are
$0.402,0.321,0.265,0.252$; global linear scores are
$0.667,0.667,0.667,0.695$.

\subsection{Distributional World-Model Probes}
\label{app:world-model-probes}

The public TD-MPC2 MT30 checkpoint contains $5{,}241{,}066$ parameters and is
evaluated without weight updates. Its SHA-256 digest, checkpoint metadata, and
all per-run outputs are included with the source data. The tasks are
cheetah-run, walker-walk, and reacher-easy, with three state-sampling seeds per
task. States are collected under the checkpoint policy with Gaussian action
noise. The evaluator draws unit action directions and horizons
$h\in\{1,2,4\}$, then branches the MuJoCo physics state to compute
Eq.~\eqref{eq:distributional-response}. Each source decoder receives 900
training queries; source and target evaluation use 240 held-out queries each.

The linear decoder is ridge regression. The nonlinear decoder has two
256-unit GELU layers, uses a disjoint 15\% validation split, and stops after 24
epochs without improvement. Both receive the encoded state, the difference of
positive and negative latent rollouts, the action direction, and the horizon.
The normalized response error is
\[
 \operatorname{NRMSE}
 =\left(
 \frac{\sum_i\|\widehat\Delta_i-\Delta_i\|_2^2}
      {\sum_i\|\Delta_i\|_2^2}
 \right)^{1/2}.
\]
Effect-sign accuracy evaluates the sign at the largest-magnitude coordinate of
the true response.

For seed $s$, actuator $(s-1)\bmod d_a$ changes sign. Adaptation uses five
states per actuator. The selective update compares positive and negative
actuator hypotheses and changes an entry only when the alternative halves its
paired loss. The global update fits an unrestricted residual probe head to the
same target pairs. Thresholds from $0.45$ through $0.90$ all localize the same
single actuator in every run.

\begin{table}[tbp]
\centering
\small
\setlength{\tabcolsep}{3.5pt}
\begin{tabular}{L{0.23\linewidth}rrrrr}
\toprule
Task & Source NRMSE & Frozen sign & Global sign & Selective sign & Localized \\
\midrule
Cheetah run & $0.701$ & $0.028$ & $0.794$ & $0.975$ & $3/3$ \\
Walker walk & $0.841$ & $0.142$ & $0.786$ & $0.869$ & $3/3$ \\
Reacher easy & $0.343$ & $0.003$ & $0.500$ & $1.000$ & $3/3$ \\
\midrule
Mean & $0.629$ & $0.057$ & $0.694$ & $0.948$ & $9/9$ \\
\bottomrule
\end{tabular}
\caption{Distributional intervention responses.}
\label{tab:tdmpc2-responses}
\end{table}

On random target directions, NRMSE is $1.138$ for the frozen model, $1.548$ for
the global update, and $0.630$ for the selective update. On directions that
exclude the shifted actuator, the corresponding stable-response errors are
$0.673$, $1.460$, and $0.673$. The selective update therefore matches the
oracle actuator map at the reported precision without changing stable
responses. The fixed finite query support together with a bounded truncation
of Euclidean loss instantiates Eq.~\eqref{eq:metric-retention-risk};
untruncated NRMSE is reported to expose large errors. The experiment does not
infer an unrestricted latent SCM.

\subsection{Language-Model States and Proposal Policies}
\label{app:llm-evaluation}

The hidden-state experiment presents Qwen2.5-7B-Instruct with a controlled
intervention record followed by a binary direct-target or delay probe. Twelve
procedural families supply 504 balanced training probes; six disjoint families
supply 360 balanced test probes. Each family and condition receives a fresh
random permutation of neutral action and state tokens, so a decoder cannot
identify a mechanism from a recurring name. Every record contains two
controlled repetitions per action. Prompt length is 697 tokens on average and
812 at maximum, with no truncation.

The language model is frozen. At the answer position, hidden vectors are
extracted from layers $0,7,14,21,$ and $28$. A logistic decoder is fit at each
layer, and a two-layer multilayer perceptron and an RBF support-vector decoder
are fit to the last layer. The controls are query-only TF--IDF, the model's
calibrated Yes/No logit margin, and a paired-evidence rule. The gated mechanism
state admits a candidate only when the controlled pair supports its lag and
unrelated interventions do not support the same target. All decoders and
thresholds are frozen before evaluation on the six test families. Balanced
accuracy is reported for mixed-label probes; readout FPR is the acceptance
rate on synchronized-readout candidates. Standard errors in
Figure~\ref{fig:llm-hidden-retention} use the six test families as clusters.

\begin{table}[tbp]
\centering
\scriptsize
\setlength{\tabcolsep}{4pt}
\begin{tabular}{L{0.34\linewidth}rrrr}
\toprule
State or decoder & Source BA & Renamed BA & Shifted-delay BA & Readout FPR \\
\midrule
Query-only TF--IDF & $0.736$ & $0.611$ & $0.000$ & $0.611$ \\
Qwen Yes/No logits & $0.653$ & $0.583$ & $0.417$ & $0.000$ \\
Qwen last-layer linear & $0.958$ & $0.847$ & $0.583$ & $0.000$ \\
Qwen last-layer RBF & $0.944$ & $0.875$ & $0.250$ & $0.111$ \\
Paired evidence & $1.000$ & $1.000$ & $1.000$ & $0.944$ \\
Gated mechanism state & $1.000$ & $1.000$ & $1.000$ & $0.056$ \\
\bottomrule
\end{tabular}
\caption{Frozen language-model state probes.}
\label{tab:llm-hidden-results}
\end{table}

The source score of the last-layer linear decoder shows that substantial
mechanism information is present in the frozen hidden state. Its
shifted-delay score of $0.583$, together with the RBF decoder's score of
$0.250$, shows that source decodability does not yield a stable answer under a
held-out mechanism change. Paired evidence recovers the changed delay but
accepts $94.4\%$ of synchronized readouts. The gate retains its perfect
shifted-delay score while reducing that false-positive rate to $5.6\%$.

Qwen2.5-7B-Instruct is used only to propose the next legal intervention from
the current transcript. Runs use deterministic decoding, 100 interaction
steps, and three seeds. The action is executed in the same complex
noisy-hidden family used by the non-language comparison. Explanations and
reward predictions receive no probe credit. Invalid outputs are mapped by the
same deterministic action parser in every language condition.

\begin{table}[tbp]
\centering
\scriptsize
\setlength{\tabcolsep}{3pt}
\begin{tabular}{L{0.34\linewidth}rrrrrr}
\toprule
Agent & Precision & Recall & Target F1 & FP & Readout FP & Hidden recall \\
\midrule
Ungated Qwen explorer & $0.127$ & $0.430$ & $0.196$ & $92.3$ & $37.0$ & $0.00$ \\
Qwen with causal prompt & $0.109$ & $0.323$ & $0.163$ & $82.3$ & $32.0$ & $0.00$ \\
Qwen + context-search gate & $0.767$ & $0.613$ & $0.678$ & $6.0$ & $0.0$ & $0.17$ \\
Qwen + control-planner gate & $0.901$ & $0.613$ & $0.727$ & $2.0$ & $0.0$ & $0.08$ \\
Pure context-search planner & $0.823$ & $0.581$ & $0.680$ & $4.0$ & $0.0$ & $0.17$ \\
Pure control planner & $0.790$ & $0.634$ & $0.697$ & $5.7$ & $0.0$ & $0.17$ \\
\bottomrule
\end{tabular}
\caption{Language-model proposal policies.}
\label{tab:llm-proposal-results}
\end{table}

The causal prompt does not remove readout contamination. The gated variants
use the same language model as a proposal source but apply the evidence gate
after observing the transition. Their advantage therefore comes from the
memory update, not from treating generated explanations as causal labels.

\subsection{Candidate Budgets and Implementation}

Candidate schedules cap computation but do not enter the probe metric. Small,
default, and wide schedules yield the same target-edge F1 at 80 and 160 steps
($0.698$ and $0.763$), zero readout false positives, and hidden recall rising
from $0.125$ to $0.625$. All thresholds, target budgets, metric cells, saved
MuJoCo states, language prompts, and parsers are fixed before held-out scoring;
malformed language proposals count as failures.

Finite-SCM actions are interventions. The writer estimates
Eq.~\eqref{eq:target-gate} from action-conditioned lift and repeated context
visits. Because these trajectories are adaptive, the iid bounds in
Appendix~\ref{app:contrast-gates} are not reported as confidence intervals for
the empirical tables. Lemma~\ref{lem:adaptive-contrast} covers randomized
adaptive logging; the continuous protocols use the paired design directly.
Candidate caps determine which contrasts are attempted, not how frozen probes
are scored. Numerical tables and raw JSON files are included as source data.

\bibliography{references}

\end{document}